\newtheorem{thm}{Theorem}[section]
\newtheorem{lemma}[thm]{Lemma}
\newtheorem{dfn}{Definition}[section]
\def\1{\ensuremath{\mathrm{1}\hspace{-.35em} \mathrm{1}}} 
\def\E{\mathbb{E}}
\def\R{\mathbb{R}}
\begin{document}

\title{Optimal learning with Bernstein Online Aggregation}

\author{ Olivier Wintenberger\\ {\tt olivier.wintenberger@upmc.fr}\\
 Sorbonne Universit\'es, UPMC Univ Paris 06\\
       LSTA, Case 158,
4 place Jussieu\\
75005 Paris,
FRANCE\\
\&\\
University of Copenhagen, DENMARK 
}
\date{}

\maketitle

\begin{abstract}
We introduce a new recursive aggregation procedure called Bernstein Online Aggregation (BOA). Its exponential weights include a second order refinement. The procedure is optimal for the model selection aggregation problem in the bounded iid setting for the square loss:  the excess of risk of its batch version achieves the fast rate of convergence $\log(M)/n$ in deviation. The BOA procedure is the first online algorithm that satisfies this optimal fast rate. The second order refinement  is required to achieve the optimality in deviation as the classical exponential weights cannot be optimal, see \cite{audibert:2009}. This refinement is settled thanks to a new stochastic conversion that estimates the cumulative predictive risk in any stochastic environment with observable second order terms. The observable second order term is shown to be sufficiently small to assert the fast rate in the iid setting when the loss is Lipschitz and strongly convex. We also introduce a multiple learning rates version of BOA. This fully adaptive BOA procedure is also optimal, up to a $\log\log(n)$ factor. \end{abstract}

{\bf Keywords}
Exponential weighted averages, learning theory, individual sequences.

\section{Introduction and main results}
We consider the online setting where observations $\mathcal D_t=\{(X_1,Y_1),\ldots,(X_t,Y_t)\}$ are available recursively ($(X_0,Y_0)=(x_0,y_0)$ arbitrary). The goal of statistical learning is to predict $Y_{t+1}\in\R$ given $X_{t+1}\in \mathcal X$, for $\mathcal X$ a probability space, on the basis of $\mathcal D_t$. In this paper, we index with the subscript $t$ any random element that is adapted to $\sigma(\mathcal D_t)$. A learner is a function $\mathcal X\mapsto \R$, denoted $ f_t$, that depends only on the past observations $\mathcal D_t$ and such that $ f_t(X_{t+1})$ is close to $Y_{t+1}$. This closeness at time $t+1$ is addressed by the predictive risk
$$
\E[\ell(Y_{t+1}, f_t(X_{t+1}))~|~\mathcal D_t]
$$ 
where $\ell:\R^2\to\R$ is a loss function. We define an online learner $  f$ as a recursive algorithm that produces at each time $t\ge 0$ a learner: $  f =( f_0, f_1, f_2,\ldots)$. The accuracy of an online learner is quantified by
the cumulative predictive risk
$$
R_{n+1}( f)=\sum_{t=1}^{n+1}\E[\ell(Y_{t}, f_{t-1}(X_{t}))~|~\mathcal D_{t-1}].
$$
We will motivate  the choice of this criteria later in the introduction.

Given a finite set $\mathcal H=\{f_1,\ldots,f_M\}$ of online learners such that $  f_j=( f_{j,t})_{t\ge 0}$, we aim at finding optimal online aggregation procedures $$\hat f=\Big(\sum_{j=1}^M \pi_{j,0}f_{j,0},\sum_{j=1}^M \pi_{j,1}f_{j,1},\sum_{j=1}^M \pi_{j,2}f_{j,2}\ldots\Big)$$
with $\sigma(\mathcal D_t)$-measurable weights $ \pi_{j,t}\ge 0$, $\sum_{j=1}^M  \pi_{j,t}=1$, $t=0,\ldots,n$. We call deterministic aggregation procedures $f_\pi$ any online learner of the form $$f_\pi=\Big(\sum_{j=1}^M\pi_jf_{j,0},\sum_{j=1}^M\pi_jf_{j,1},\sum_{j=1}^M\pi_jf_{j,2},\ldots\Big)$$ with $\pi=(\pi_j)_{1\le j\le M}$ with $\sum_{j=1}^M \pi_{j}=1$. Notice that $\pi$ can be viewed as a probability measure on the random index $J\in \{1,\ldots,M\}$. We will also use the notation $\pi_t$ for the probability measure $(\pi_{j,t})_{1\le j\le M}$ on $\{1,\ldots,M\}$. Then, $f_\pi=\E_\pi[f_J]$ and $\hat f =(\E_{\pi_0}[f_{J,0}],\E_{\pi_1}[f_{J,1}],\E_{\pi_2}[f_{J,2}],\ldots)$. The predictive performances of an online aggregation procedure $\hat f$ is compared with the best deterministic aggregation of $\mathcal H$ or the best element of $\mathcal H$. We refer to these two different objectives as, respectively, the convex aggregation Problem (C) or the model selection aggregation Problem (MS). The performance of online aggregation procedures is usually measured using the cumulative loss (in the context of individual sequences prediction, see the seminal book \cite{cesabianchi:lugosi:2006}). The first aim of this paper is to use instead the cumulative predictive risk for any stochastic process $(X_t,Y_t)$. However, to define properly the notion of optimality as in \cite{nemirovski:2000,tsybakov:2003}, we will also consider the specific iid setting of independent identically distributed observations $(X_t,Y_t)$ when the online learners are constants: $f_{j,t}=f_j$, $ t\ge 0$. In the iid setting, we suppress the indexation with time $t$ as much as possible and we define the usual risk $\bar R(f)= \E[\ell(Y,f(X))]=(n+1)^{-1}R_{n+1}(f)$ for any constant learner $f$. A batch learner is defined as $
\bar f=f_\pi$ for $\sigma(\mathcal D_n)$-measurable weights $\pi_j\ge 0$, $\sum_{j=1}^M\pi_j=1$. 
Lower bounds for the excesses of risk
$$
\mbox{Problem (C):}\qquad \bar R(\bar f)-\inf_\pi \bar R(f_\pi),\qquad\mbox{Problem (MS):}\qquad \bar R(\bar f)-\min_j \bar R(f_j),
$$
are provided for the expectation of the square loss in \cite{nemirovski:2000,tsybakov:2003}. Theorem 4.1 in \cite{rigollet:2012} provides sharper lower bounds in deviation for some strongly convex Lipschitz losses and deterministic $X_t$s. These lower bounds are called the optimal rates of convergence and a weaker version applies also in our more general setting with stochastic (but possibly degenerate) $X_t$s.  For Problem (C), we retain the rate $\sqrt{\log(M)/n}$ that is optimal when $M>\sqrt n$ and $\log(M)/n$ for Problem (MS), see \cite{rigollet:2012} for details.  We are now ready to define the notion of optimality:
\begin{dfn}[adapted from Theorem 4.1 in \cite{rigollet:2012}]\label{def}
In the iid setting, a batch aggregation procedure is optimal for Problems (C) when $M>\sqrt n$  or (MS) if there exists $C>0$ such that with probability $1-e^{-x}$, $x>0$, it holds
$$
\bar R(\bar f)-\inf_\pi \bar R(f_\pi)\le C\frac{\sqrt{\log M}+x}{\sqrt n},  \qquad\mbox{or}\qquad \bar R(\bar f)-\min_j \bar R(f_j)\le C\frac{\log M+x}n.
$$
\end{dfn}
Very few known procedures achieve the fast rate $\log(M)/n$ in deviation and none of them are issued from an  online procedure. In this article, we provide the Bernstein Online Aggregation  (BOA)  that is proved to be  the first  online aggregation procedure such that its batch version, defined as $\bar f=(n+1)^{-1}\sum_{t=0}^{n}\hat f_t$, is optimal. Before defining it properly, let us review the existing optimal procedures for Problem (MS).

The batch procedures in \cite{audibert:2007,lecue:mendelson:2009,lecue:rigollet:2013} achieve the optimal rate in deviation.  A priori, they face practical issues as they require a computational optimization technique to approximate the weights that are defined as an optimum. A step further has been done in the context of quadratic loss with gaussian noise  in \cite{dai:rigollet:zhang:2012} where an explicit iterative scheme is provided. We will now explain why the question of the existence of an online algorithm whose batch version achieves fast rate of convergence in deviations remained open (see the conclusion of \cite{audibert:2009}) before our work. Optimal (for the regret) online  aggregation procedures are exponential weights algorithms (EWAs), see  \cite{vovk:1990,haussler:kivinen:warmuth:1998}. The batch versions of EWAs  coincides with the Progressive Mixture Rules (PRMs). In the iid setting, the properties of the excess of risk of such procedures have been extensively studied in \cite{catoni:2004}. PRMs achieve the fast optimal rate $\log(M)/n$ in expectation  (that follows  from the expectation of the optimal regret bound by an application of Jensen's inequality, see \cite{catoni:2004,juditsky:rigollet:tsybakov:2008}). However, PRMs are suboptimal in deviation, i.e. the optimal rate cannot hold with high probability, see \cite{audibert:2007,dai:rigollet:zhang:2012}. It is because the optimality for the regret defined as in \cite{haussler:kivinen:warmuth:1998} does not coincides with the notion of optimality for the risk in deviation used in Definition \ref{def}.  

The optimal BOA procedure is obtained using a necessary second order refinement of EWA. Figure \ref{fig:1} describes the computation of the weights in the BOA procedure where $\ell_{j,t}$ denotes the opposite of the instantaneous regret $\ell(Y_{t}, f_{j,t-1}(X_{t})) -\E_{\pi_{t-1}}[\ell(Y_{t}, f_{J,t-1}(X_{t}))]$ (linearized when the loss $\ell$ is convex). 
\begin{figure}[h!]
 \fbox{
 \begin{minipage}[c]{15cm}
 {\it Parameters:} Learning rate $\eta>0$.\\
 {\it Initialization:} Set $\pi_{j,0}>0$ such that $\sum_{j=1}^M\pi_{j,0}=1$.\\
{\it For:} Each time round $ 1\le t\le n$,  compute the weight vector $\pi_{t} = (\pi_{j,t})_{1 \leq j \leq M} $:  
$$
\pi_{j,t }=\frac{\exp(-\eta\ell(Y_{t}, f_{j,t-1}(X_{t})) -\eta^2 \ell_{j,t}^2 )\pi_{j,t-1}}{\E_{\pi_{t-1}}[\exp(-\eta \ell(Y_{t},f_{J,t-1}(X_{t})) -\eta^2 \ell_{J,t}^2)]}=\frac{\exp(-\eta\ell_{j,t}(1+\eta \ell_{j,t}) )\pi_{j,t-1}}{\E_{\pi_{t-1}}[\exp(-\eta\ell_{J,t}(1+\eta \ell_{J,t}))]}.
$$
\end{minipage} 
 }
\caption{The BOA algorithm}
\label{fig:1}
\end{figure}
Other procedures already exist with different second order refinements, see \cite{audibert:2009,hazan:kale:2010}. None of them have been proved to be optimal for (MS) in deviation. The choice of the second order refinement is crucial. In this paper, the second order refinement is chosen as $\ell_{j,t}^2$ with
$$
\ell_{j,t}=\ell(Y_{t}, f_{j,t-1}(X_{t})) -\E_{\pi_{t-1}}[\ell(Y_{t}, f_{J,t-1}(X_{t}))].
$$
Notice that the second order refinement $\ell_{j,t}^2$ tends to stabilize the procedure as the distances between  the losses of the learners and the aggregation procedure are costly.

We achieve an upper bound for the excess of the cumulative predictive risk by  first deriving a second order bound on the regret: 
$$
\textnormal{Err}_{n+1}(\hat f)-\textnormal{Err}_{n+1}(f_\pi)\qquad \mbox{where}\qquad \textnormal{Err}_{n+1}(f)=\sum_{t=1}^{n+1}\ell(Y_{t},  f_{t-1}(X_{t})).
$$
Second we extend it to an upper bound on the excess of the cumulative predictive risk $R_{n+1}(\hat f)-  R_{n+1}(f_\pi)$ in any stochastic environment. In previous works, the  online to batch conversion follows  from an application of a Bernstein inequality for martingales. It provides a control of the deviations in the stochastic environment via the predictable quadratic variation, see for instance \cite{freedman:1975,zhang:2005,kakade:tewari:2008,gaillard:stoltz:vanerven:2014}. Here we prefer to use an empirical counterpart of the classical Bernstein inequality, based on the quadratic variation instead of the predictive quadratic variation. For any martingale $(M_t)$, we denote $\Delta M_t=M_t-M_{t-1}$ its difference ($\Delta M_0=0$ by convention) and $[M]_t=\sum_{j=1}^t\Delta M_j^2$ its quadratic variation. We will use the following new empirical Bernstein inequality:
\begin{thm}\label{th:mart}
Let $(M_t)$ be a martingale such that $\Delta M_t\ge -1/2$ a.s. for all $t\ge 0$.
Then for any $n\ge 0$ we have
$\E[\exp( M_n-   [M]_n)]\le 1.$ Without any boundedness assumption, we still have
\begin{equation}\label{eq:cmsu}
\E\left[\exp\left( 2^{-1}\left(M_n-   [M]_n-\sum_{t=1}^n\Delta M_t\1_{\Delta M_t< -1/2}\right)\right)\right]\le 1.
\end{equation}
\end{thm}
Empirical Bernstein's inequalities have already been developed in \cite{audibert:2006,maurer:pontil:2009}  and use in the multi-armed bandit and penalized ERM problems.  Applying Theorem \ref{th:mart}, we estimate successively  the deviations of two different martingales 
\begin{enumerate}
\item $\Delta M_{J,t}= -\eta \ell_{J,t}$ as a function of $J$ distributed conditionally as $\pi_{t-1}$ on $\{1,\ldots,M\}$,
\item $ M_{j,t}=\eta(R_t(\hat f)-  R_t(f_j)-\textnormal{Err}_t(\hat f)+\textnormal{Err}_t(f_j))$ such that $\Delta M_{j,t}=\eta(\E_{t-1}[\ell_{j,t}]-\ell_{j,t})$ where $\E_{t-1}$ denotes the expectation of $(X_t,Y_t)$ conditionally on $\mathcal D_{t-1}$, $1\le j\le M$.
\end{enumerate}
The first application 1. of Theorem \ref{th:mart} will provide a second order bound on the regret in the deterministic setting whereas the second application 2. will provide the new stochastic conversion. In both cases, the second order term will be equal to $\eta^{-1}[M_j]_{n+1}=\eta\sum_{t=1}^{n+1}\ell_{j,t}^2$ after renormalization. It is the main motivation of BOA; as our notion of  optimality requires a stochastic conversion, a second order term necessarily appears in the bound of the excess of the cumulative predictive risk. An online procedure will achieve good performances in the batch setting if it is regularized with  the necessary cost due to the stochastic conversion. The BOA procedure achieves this aim by incorporating this second order term in the computation of the weights.

In the first application 1. of Theorem \ref{th:mart}, we have $\E_{\pi_{t-1}}[\Delta M_{J,t}]=0$ and an application of Theorem \ref{th:mart} yields the regret bound of Theorem \ref{th:reg}:
$$
\E_{\hat \pi}[\textnormal{Err}_{n+1}( f_J)]\le  \inf_\pi \Big\{\E_\pi\Big[\textnormal{Err}_{n+1}(f_J)+\eta\sum_{t=1}^{n+1} \ell_{J,t}^2+ \frac{ \log(\pi_J/\pi_{J,0})}{\eta} \Big]\Big\},
$$
where $\E_{\hat \pi}[\textnormal{Err}_{n+1}( f_J)]=\sum_{t=1}^{n+1}\E_{\hat \pi_{t-1}}[\ell(Y_{t} f_{J,t}(X_{t}))]$. 
Such second order bounds also hold for the regret of other algorithms, see \cite{cesabianchi:mansour:stoltz:2007,gaillard:stoltz:vanerven:2014,luo:schapire:2015,koolen:vanerven:2015}. Using the new stochastic conversion based on the application 2. of Theorem \ref{th:mart}, this second order regret bound is converted to a one on the cumulative predictive risk (see Theorem \ref{th:regcr}). With probability $1-e^{-x}$, $x>0$, we have
$$
\E_{\hat \pi}[R_{n+1}( f_J)]\le  \inf_\pi \Big\{\E_\pi\Big[R_{n+1}(f_J)+2\eta\sum_{t=1}^{n+1} \ell_{J,t}^2+ \frac{\log(\pi_J/\pi_{J,0})+x}{\eta}\Big] \Big\}.
$$
Thanks to the use of the cumulative predictive risk, this bound is valid in any stochastic environment. We will extend it in various directions. We will introduce 
\begin{itemize}
\item the "gradient trick" to bound the excess of the cumulative predictive risk in Problem (C),
\item the multiple learning rates for adapting the procedure and
\item the batch version of BOA to achieve the fast rate of convergence in Problem (MS).
\end{itemize}

The "gradient trick" is a standard argument to solve Problem (C),  see \cite{cesabianchi:lugosi:2006}. When the loss $\ell$ is convex with respect to its second argument, its sub-gradient is denoted $\ell'$. In this case, we  consider a convex version of the BOA procedure described in Figure \ref{fig:1}. The original loss $\ell$ is replaced with its linearized version 
\begin{equation}\label{eq:lin}
\ell'(Y_{t},\hat f_{t-1}(X_{t}))f_{j,t-1}(X_{t}),
\end{equation}
and we denote, with some abuse of notation,
$$
\ell_{j,t}= \ell'(Y_{t},\hat f_{t-1}(X_{t}))(f_{j,t-1}(X_t)- \hat f_{t-1}(X_{t})).
$$
Linearizing the loss, we can compare the regret of the (sub-gradient version of the) BOA procedure $\hat f=\E_{\hat\pi}[f_J]$ with the best deterministic aggregation of the elements in the dictionary. We obtain in Theorem \ref{th:reg} a second order regret bound   for Problem (C)
$$
\textnormal{Err}_{n+1}(\hat f)\le  \inf_{\pi} \Big\{\textnormal{Err}_{n+1}(f_\pi)+\eta\sum_{t=1}^{n+1} \E_\pi[ \ell_{J,t}^2]+ \frac{\E_\pi[\log(\pi_J/\pi_{J,0})]}\eta\Big\}.
$$
When trying to optimize the regret bound in the learning rate $\eta>0$, we obtain
$$
\sqrt{\frac{\E_\pi[\log(\pi_J/\pi_{J,0})]}{ \sum_{t=1}^{n+1} \E_\pi[ \ell_{J,t}^2]}}\le \E_\pi\left[\sqrt{\frac{\log(\pi_{J,0}^{-1})}{ \sum_{t=1}^{n+1}  \ell_{J,t}^2}}\right],
$$
As $\pi$ is unknown, this tuning parameter is not tractable in practice. Its worst case version $ \max_j \sqrt{\log(\pi_{j,0}^{-1})}/\sqrt{\sum_{t=1}^{n+1}  \ell_{j,t}^2}$ is not satisfactory.  
Multiple learning rates have been introduced by \cite{blum:mansour:2005} to solve this issue (see also \cite{gaillard:stoltz:vanerven:2014}). We introduce the multiple learning rates version of BOA  in Figure \ref{fig:2} which is also fully adaptative as it adapts to any possible range of observations. 

\begin{figure}[h!]
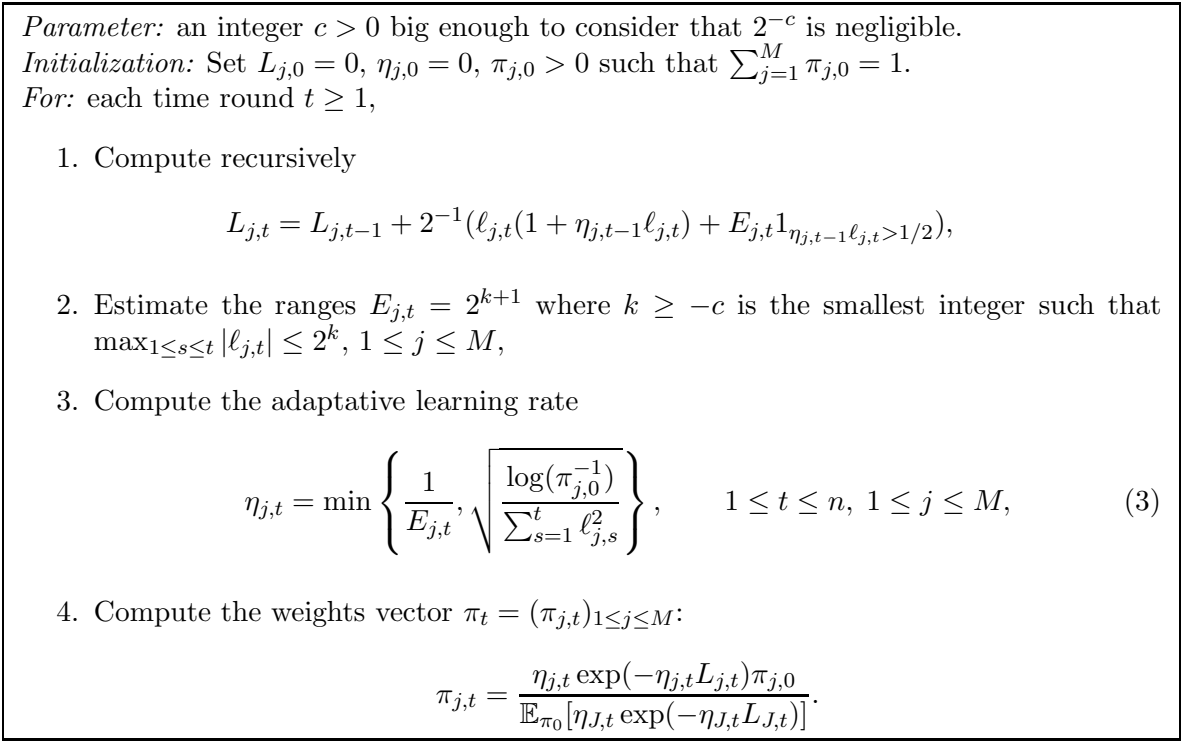

 \fbox{
 \begin{minipage}[c]{15cm}
 {\it Parameter:} an integer $c>0$   big enough to consider that $2^{-c}$ is negligible. \\
 {\it Initialization:} Set $L_{j,0}=0$, $\eta_{j,0}=0$, $\pi_{j,0}>0$ such that $\sum_{j=1}^M\pi_{j,0}=1$.\\
{\it For:} each time round $t \geq 1$,  
 \begin{enumerate} 
\item Compute recursively $$\hspace{-1cm}L_{j,t}=L_{j,t-1}+2^{-1}(\ell_{j,t}(1 +\eta_{j,t-1}\ell_{j,t})+E_{j,t}1_{\eta_{j,t-1}\ell_{j,t}>1/2}), $$
\item Estimate the ranges  $E_{j,t}=2^{k+1}$ where $k\ge -c$ is the smallest integer such that 
$
\max_{1\le s\le t}|\ell_{j,t}|\le 2^k
$, $1\le j\le M$,
\item Compute the adaptative learning rate
\begin{equation}\label{eq:lrEt}
\eta_{j,t}=\min\left\{\frac1{E_{j,t}},\sqrt{\frac{\log (\pi_{j,0}^{-1})}{\sum_{s=1}^t \ell_{j,s}^2}}\right\},  \qquad 1\le t\le n,\; 1\le j\le M,
\end{equation}
\item  Compute the weights vector $\pi_{t} = (\pi_{j,t})_{1 \leq j \leq M} $:  
$$
\pi_{j,t }=
\frac{\eta_{j,t}\exp(- \eta_{j,t} L_{j,t} )\pi_{j,0}}{ \E_{\pi_0}[\eta_{J,t}\exp(- \eta_{J,t} L_{J,t})]}.
$$
\end{enumerate}
\end{minipage} 
 }
\caption{The fully adaptive BOA procedure}
\label{fig:2}
\end{figure}
The novelty, compared with the "doubling trick" developed in \cite{cesabianchi:mansour:stoltz:2007}, is the dependence of the learning rates and the estimated ranges with respect to $j$ and the expression of the weights with respect to the learning rates and the ranges estimators.  For this adaptive BOA procedure we obtain regret bounds such as
$$
\textnormal{Err}_{n+1}(\hat f)\le  \inf_{\pi} \left\{\textnormal{Err}_{n+1}(f_\pi)+C\E_\pi \left[\sqrt{\sum_{t=1}^{n+1} \ell_{J,t}^2\log (\pi_{J,0}^{-1})}+ E_J\log (\pi_{J,0}^{-1})\right]\right\},
$$
for some "constant"  $C>0$ that grows as $\log\log(n)$, see Theorem \ref{th:regad}   for details. Such second order bounds involving excess losses terms as the $\ell_{j,t}$s have been proved for other algorithms in \cite{gaillard:stoltz:vanerven:2014,luo:schapire:2015,koolen:vanerven:2015}. We refer to these articles for nice consequences of such bounds in the individual sequences framework. Here again, the stochastic conversion holds in any stochastic environment and we obtain with probability $1-e^{-x}$, $x>0$
$$
R_{n+1}(\hat f)\le  \inf_{\pi} \left\{R_{n+1}(f_\pi)+C\E_\pi \left[\sqrt{\sum_{t=1}^{n+1} \ell_{J,t}^2(\log (\pi_{J,0}^{-1})+x)}+E_J\log (\pi_{J,0}^{-1})+x)\right]\right\},
$$
The optimal bound for Problem (C) is proved in the very general setting: under a boundedness assumption, the rate of convergence of $R_{n+1}(\hat f)/(n+1)$ (an upper bound of $\bar R(\bar f)$ in the iid setting), is smaller than $\sqrt{\log(M)/n}.$ 
Notice that  the classical online-to-batch conversion leads to similar results, see \cite{cesabianchi:mansour:stoltz:2007,gerchinovitz:2013,gaillard:stoltz:vanerven:2014}, and that our second order refinement is not necessary to obtain such upper bounds. Notice also that such upper bounds  were already derived in non iid settings  for the excess of risk (and not the cumulative predictive risk) under  restrictive dependent assumptions, see \cite{alquier:li:wintenberger:2013,mohri:rostamizadeh:2010,agarwal:duchi:2013}.  It is remarkable to extend the optimal bound of Problem (C) to any stochastic environment thanks to the use of the cumulative predictive risk. It is because the cumulative predictive risk of $f_\pi$ in the upper bound takes into account the dependence as it is a random variable in non iid settings. We believe that the cumulative predictive risk is the correct criteria to assert the prediction accuracy of online algorithms in stochastic environment as it  coincides with the regret in the deterministic setting and with the classical risk for batch procedures in the iid setting. Moreover, it appears naturally when using the minimax theory approach, see \cite{abenethy:2009}. However, up to our knowledge, it is the first time that the cumulative predictive risk is used to compare an online procedure with deterministic aggregation procedures.

The fast rate of convergence $\log(M)/n$ in Problem (MS) is achieved thanks to a careful study of the second order terms $\sum_{t=1}^{n+1} \ell_{j,t}^2$. It also requires more conditions on the loss in order  to behave locally like the square loss, see \cite{audibert:2009}. We restrict us to losses $\ell$ that are $C_\ell$-strongly convex and $C_b$-Lipschitz functions in the iid setting, see \cite{kakade:tewari:2008} for an extensive study of this context. We fix the initial weights uniformly $\pi_{j,0}=M^{-1}$. We obtain in Theorem \ref{th:opt} the fast rate of convergence for the batch version of the  BOA procedure; with probability $1-e^{-x}$, $x>0$, 
$$
\bar R(\bar f)\le \frac{R_{n+1}(\hat f)}{n+1}\le \min_{1\le j\le M}R(f_j)+\frac{2 \log(M)+3x}{\eta(n+1)},
$$
for $\eta^{-1}$ larger than $C_b^2/C_\ell$ up to a multiplicative constant. The second order term is bounded by the excess of risk using  the strong convexity assumption on the loss. We conclude by providing the fast rate bound on the excess of risk of the batch version of the adaptive BOA at the price of larger "constants"  that grows at the rate $\log\log(n)$.

The paper is organized as follows: We present the second order regret bounds for different versions of BOA in Section \ref{sec:sob}. The new stochastic conversion and the excess of cumulative predictive risk bounds in a stochastic environment are provided in Section \ref{sec:se}. In the next Section, we introduce some useful probabilistic preliminaries.

\section{Preliminaries}\label{sec:tm}
As in \cite{audibert:2009}, the recursive argument for supermartingales will be at the core of the proofs developed in this paper. It will be used jointly with the variational form of the entropy to provide second order regret bounds.

\subsection{The proof of the martingale inequality in Theorem \ref{th:mart}}
The proof of the first empirical Bernstein inequality for martingales of Theorem \ref{th:mart} follows from an exponential inequality and by a classical recursive supermartingales argument, see \cite{freedman:1975}. As $X=\Delta M_t\ge -1/2$ a.s., from the inequality $\log(1+x)\ge x-x^2$ for $x>-1/2$ (stated as Lemma 1 in \cite{cesabianchi:mansour:stoltz:2007}), we have 
\begin{equation}\label{eq:cms}
X-X^2\le\log(1+X) \Leftrightarrow \exp(X-X^2)\le 1+X\Rightarrow\E_{t-1}[\exp(X-X^2)]\le 1.
\end{equation}
Here we used that $\E_{t-1}[X]=0$ as $X=\Delta M_t$ is a difference of martingale. The proof ends by using the classical recursive argument for supermartingales; from the definition of the difference of martingale $X=\Delta M_t$, we obtain as a consequence of \eqref{eq:cms} that 
$$
\E[\exp(M_t-[M]^2_t)]\le \E[\exp(M_{t-1}-[M]^2_{t-1})].
$$
As $\E[\exp(M_0-[M]^2_0)]=1$, applying a recursion for $t=1,\ldots, n$ provides the desired result.\\

Without any boundedness assumption, we have 
\begin{align*}
\E[\exp(X-X^2)]&=\E[\exp(X-X^2)\1_{X\ge -1/2}]+\E[\exp(X-X^2)\1_{X< -1/2}]\\
&\le \E[(1+X)\1_{X\ge -1/2}]+\E[\1_{X< -1/2}]\le 1+\E[X\1_{X\ge -1/2}].
\end{align*}
As $X$ is centered, we can bound 
\[
1+\E[X\1_{X\ge -1/2}]=1-\E[X\1_{X< -1/2}]\le \E[1+X\1_{X< -1/2}]\le\E[\exp( X\1_{X< -1/2})].
\]
Using Cauchy-Schwarz inequality and the preceding arguments, we obtain
\begin{align*}
\E[\exp( 2^{-1}(X-  X^2-X\1_{X< -1/2}))]&\le \sqrt{\E[\exp( X-  X^2)]\E[\exp(-X\1_{X< -1/2})]}\\
&\le \sqrt{\E[\exp( X\1_{X< -1/2})]\E[\exp(-X\1_{X< -1/2})]}.
\end{align*}
The desired result follows from the Jensen's inequality followed by the same recursive argument for supermartingales as above.

\subsection{The variational form of the entropy}
The relative entropy (or Kullback-Leibler divergence) $\mathcal K(Q,P)=\E_Q[\log(dQ/dP)]$ is a pseudo-distance between any probability measures $P$ and $Q$. Let us remind the basic property of the entropy: the  variational formula of the entropy originally proved in full generality in \cite{donsker:varadhan:1975}. We consider here a version well adapted for obtaining second order regret bounds:
\begin{lemma}
\label{le:vf}
For any probability measure $P$  on $\mathcal X$ and any measurable  functions $h$, $g:\mathcal X\rightarrow\mathbb{R}$ we have:
\begin{multline} \label{eq:vf}
\E_P[\exp(h-\E_P[h]-g)]\le 1\\
\Longleftrightarrow \E_Q[h]- \E_P[h]\le  \E_Q[g]+\mathcal K(Q,P),\qquad \mbox{for any probability measure }Q.
\end{multline}
The left hand side corresponds to the right hand side with $Q$ equals the Gibbs measure $\E_P[e^{h-g}]dQ =  e^{h-g } 
dP $.
\end{lemma}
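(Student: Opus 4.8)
The plan is to recognize Lemma \ref{le:vf} as a form of the Gibbs variational principle and to prove the two implications directly, exhibiting the Gibbs measure as the extremizer that realizes equality.

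For the implication $\Longrightarrow$, I would fix an arbitrary probability measure $Q$. If $Q$ is not absolutely continuous with respect to $P$, then $\mathcal K(Q,P)=+\infty$ and the right-hand inequality holds trivially, so I may assume $Q\ll P$ with density $dQ/dP$. The idea is to split
$$
\E_Q[h]-\E_P[h]-\E_Q[g]=\E_Q\Big[\log\frac{e^{h-\E_P[h]-g}}{dQ/dP}\Big]+\E_Q\Big[\log\frac{dQ}{dP}\Big],
$$
recognizing the second summand as $\mathcal K(Q,P)$. Applying Jensen's inequality (concavity of the logarithm) to the first summand, together with the change of measure $\E_Q[\,\cdot\,/(dQ/dP)]=\E_P[\cdot]$, gives
$$
\E_Q\Big[\log\frac{e^{h-\E_P[h]-g}}{dQ/dP}\Big]\le\log\E_P[e^{h-\E_P[h]-g}]\le 0,
$$
where the last step is exactly the hypothesis. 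Rearranging yields the claimed bound for this $Q$, and hence for all $Q$.

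For the converse $\Longleftarrow$, I would simply specialize the assumed inequality to the Gibbs measure $Q$ defined by $dQ=e^{h-g}\,dP/\E_P[e^{h-g}]$. Writing $Z:=\E_P[e^{h-g}]$, one computes $\mathcal K(Q,P)=\E_Q[h-g]-\log Z$, so the right-hand inequality collapses to $-\E_P[h]\le-\log Z$, i.e. $\log\E_P[e^{h-g}]\le\E_P[h]$, which is precisely $\E_P[\exp(h-\E_P[h]-g)]\le 1$. The same computation shows that every inequality is an equality for the Gibbs choice of $Q$, confirming the remark attached to the statement.

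The substantive points to check are integrability rather than algebra: I expect the main care to be needed in the reverse direction, where the Gibbs measure is a genuine probability measure only when $Z=\E_P[e^{h-g}]\in(0,\infty)$, and in both directions one must ensure $\E_P[h]$ and $\E_Q[g]$ are finite so the rearrangements are meaningful. In the regret-bound applications these quantities involve bounded or integrable losses, so the conditions hold; in particular the finiteness of $Z$ is compatible with the target bound $\E_P[\exp(h-\E_P[h]-g)]\le 1$.
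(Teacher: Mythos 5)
Your proof is correct and follows exactly the route the paper indicates: the paper states Lemma \ref{le:vf} without giving a proof, citing Donsker--Varadhan, and its only hint --- that the Gibbs measure $\E_P[e^{h-g}]\,dQ=e^{h-g}\,dP$ realizes the dual identity --- is precisely your reverse direction, while your forward direction (the decomposition of $\E_Q[h]-\E_P[h]-\E_Q[g]$ followed by Jensen and the change of measure) is the standard argument behind that citation. Your integrability caveats are also resolved automatically in the paper's applications, where $P$ and $Q$ are measures on the finite set $\{1,\ldots,M\}$, so $\E_P[e^{h-g}]\in(0,\infty)$ and all the expectations involved are finite.
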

That the Gibbs measure realizes the dual identity is at the core of the PAC-bayesian approach. Exponential weights aggregation procedures arise naturally as they can be considered as Gibbs measures, see \cite{catoni:2007}.

\section{Second order regret bounds for the BOA procedure}\label{sec:sob}
\subsection{First regret bounds and link with the individual sequences framework}
We work conditionally on $\mathcal D_{n+1}$; it is the deterministic setting, similar than in \cite{gerchinovitz:2013}, where $(X_t,Y_t)=(x_t,y_t)$ are provided recursively for $1\le t\le n$. In that case, the cumulative loss $\textnormal{Err}_{n+1}(f)$ quantify the prediction of $f=(f_0,f_1,f_2,\ldots)$. We state first a regret bound for non convex losses, and then move to the case of convex losses combined with the "gradient trick" as in the Appendix of \cite{gaillard:stoltz:vanerven:2014}. Recall that $\E_{\hat \pi}[\textnormal{Err}_{n+1}( f_J)]=\sum_{t=1}^{n+1}\E_{\hat \pi_{t-1}}[\ell(Y_{t} ,f_{J,t}(X_{t}))]$.
\begin{thm}\label{th:reg}
Assume that $\eta>0 $ satisfies
\begin{equation}\label{eq:eta}
\eta\max_{ 1\le t\le n+1} \max_{1\le j\le M} {\ell_{j,t}}\le 1/2,
\end{equation}
then the cumulative loss of the BOA  procedure with $
\ell_{j,t}= \ell(Y_{t},f_{j,t-1}(X_t))-\ell(Y_{t},\hat f_{t-1}(X_{t}))
$ satisfies
$$
\E_{\hat \pi}[\textnormal{Err}_{n+1}( f_J)]\le  \inf_{\pi} \Big\{\E_\pi\Big[\textnormal{Err}_{n+1}(f_J)+\eta\sum_{t=1}^{n+1} \ell_{J,t}^2\Big]+ \frac{\mathcal K(\pi,\pi_0)}{\eta} \Big\}.
$$
If $\ell$ is convex with respect to its second argument,  the cumulative loss of the BOA  procedure with  $
\ell_{j,t}= \ell'(Y_{t},\hat f_{t-1}(X_{t}))(f_{j,t-1}(X_t)- \hat f_{t-1}(X_{t}))
$ also satisfies
$$
\textnormal{Err}_{n+1}(\hat f)\le  \inf_{\pi} \Big\{\textnormal{Err}_{n+1}(f_\pi)+\eta\sum_{t=1}^{n+1} \E_\pi[ \ell_{J,t}^2]+ \frac{\mathcal K(\pi,\pi_0)}\eta\Big\}.
$$
\end{thm}
\begin{proof}
We consider  $\Delta M_{J,t+1}=-\eta \ell_{J,t+1}$ that is a centered random variable on $\{1,\ldots,M\}$ when $J$ is distributed as $\pi_{t}$.  Under the assumption \eqref{eq:eta}, $\Delta {M_{J,t+1}}\ge-1/2$ for any $0\le t\le n$ a.s.. An application of the inequality \eqref{eq:cms}  provides the inequality
\begin{equation}\label{eq:si}
\E_{\pi_{t}}[\exp(-\eta  \ell_{J,t+1}(1+\eta \ell_{J,t+1}))]\le 1.
\end{equation}
From the recursive definition of the BOA procedure provided in Figure \ref{fig:1}, we have the expression
$$
\pi_{j,t}=\frac{\exp (-\eta\sum_{s=1}^{t} \ell_{j,s}(1+  \eta \ell_{j,s}))\pi_{j,0}}{\E_{\pi_0}[\exp (-\eta\sum_{s=1}^{t} \ell_{J,s}(1+  \eta  \ell_{J,s}))]}.
$$
Plugging the expression of the weights $\pi_{j,t}$ in the inequality \eqref{eq:si} provides
$$
\E_{\pi_0}\Big[\exp\Big(-\eta\sum_{s=1}^{t+1} \ell_{J,s}(1+  \eta \ell_{J,s})\Big)\Big]\le \E_{\pi_0}\Big[\exp \Big(-\eta\sum_{s=1}^{t} \ell_{J,s}(1+  \eta \ell_{J,s})\Big)\Big].
$$
By a recursive argument on $0\le t\le n$ we obtain
$$
\E_{\pi_0}\Big[\exp\Big({-\eta\sum_{t=1}^{n+1} \ell_{J,t}(1+\eta\ell_{J,t})}\Big)\Big]\le1.
$$
Equivalently, using the variational form of the entropy \eqref{eq:vf}, 
\begin{equation}\label{eq:vfl}
0\le   \inf_\pi\Big\{\E_{\pi}\Big[\eta \sum_{t=1}^{n+1}\ell_{J,t}+ \eta^2 \sum_{t=1}^{n+1} \ell_{J,t}^2\Big]+  \mathcal K(\pi,\pi_0)\Big\},
\end{equation}
$\pi$ denoting  any probability measure on $\{1,\ldots,M\}$.
The first regret bound in Theorem \ref{th:reg} follows from the identity $\sum_{t=1}^{n+1}\ell_{J,t}=\textnormal{Err}_{n+1}(f_J)-\E_{\hat \pi}[\textnormal{Err}_{n+1}(f_J)]$. The second result follows by an application of the "gradient trick", i.e. noticing that 
\begin{align*}
\textnormal{Err}_{n+1}( \hat f)-\textnormal{Err}_{n+1}(f_\pi)&= \sum_{t=1}^{n+1}   \ell(Y_{t},\hat f_{t-1}(X_{t}))- \ell(Y_{t},\E_\pi[f_{J,t-1}](X_{t}))\\&\le\sum_{t=1}^{n+1}\ell'(Y_{t},\hat f_{t-1}(X_{t}))( \hat f_{t-1}(X_{t})-\E_\pi[f_{J,t-1}](X_{t}))\\&=\E_\pi\Big[\sum_{t=1}^{n+1}\ell'(Y_{t},\hat f_{t-1}(X_{t}))( \hat f_{t-1}(X_{t})-f_{J,t-1}(X_{t}))\Big]\\
&=-\E_\pi\Big[\sum_{t=1}^{n+1} \ell_{J,t}\Big].\end{align*}
\end{proof}
The second order term in the last regret bound is equal to
$$
\sum_{t=1}^{n+1} \E_\pi[\eta\ell_{J,t}^2]= \sum_{t=1}^{n+1} \E_{\pi}[\eta \ell'(Y_{t},\hat f_{t-1}(X_{t}) )^2(\hat f_{t-1}(X_{t}) - f_{J,t-1}(X_{t}) )^2].
$$
This term can be small because the sub-gradients are small or because the BOA weights are close to the objective $\pi$. Such second order upper bounds can heavily depend on the behaviors of the different learners $f_j$. Thus, a unique learning rate cannot be efficient in cases where the learners have different second order properties. To solve this issue, we consider the multiple learning rates  version of BOA 
\begin{equation}
\label{eq:ml}
\pi_{j,t }=\frac{\exp(-\eta_{j}\ell_{j,t}(1+\eta_{j} \ell_{j,t}) )\pi_{j,t-1}}{\E_{\pi_{t-1}}[\exp(-\eta_{J}\ell_{J,t}(1+\eta_{J} \ell_{J,t}))]}.
\end{equation} We can extend the preceding regret bound to this more sophisticated procedure:
\begin{thm}\label{th:ml}
Consider a loss  $\ell$ convex with respect to its second argument and multiple learning rates $\eta_j$, $1\le j\le M$, that are positive. If 
$$
\max_{ 1\le t\le n+1} \max_{1\le j\le M} {\eta_j\ell_{j,t}}\le 1/2,\qquad a.s.,
$$
then  the cumulative loss of the BOA procedure with multiple learning rates  satisfies
$$
\textnormal{Err}_{n+1}(\hat f)\le  \inf_{\pi} \Big\{\textnormal{Err}_{n+1}(f_\pi)+ \E_\pi\Big[ \eta_J\sum_{t=1}^{n+1}\ell_{J,t}^2+ \frac{\log(\pi_J/\pi_{J,0})+\log(\E_{\pi_0}[\eta_J^{-1}]/\E_{\pi}[\eta_J^{-1}])}{\eta_J}\Big]\Big\}.
$$
\end{thm}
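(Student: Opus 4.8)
The plan is to mirror the proof of Theorem \ref{th:reg} but to correct for the heterogeneity of the learning rates, which breaks the single-rate argument at one precise point. With a common $\eta$ one exploits $\E_{\pi_{t-1}}[\ell_{j,t}]=0$ (valid here since $\hat f_{t-1}=\E_{\pi_{t-1}}[f_{j,t-1}]$ forces $\E_{\pi_{t-1}}[\ell_{j,t}]=0$ for the linearized $\ell_{j,t}$) to obtain the normalizer bound $\E_{\pi_{t-1}}[\exp(-\eta\ell_{j,t}(1+\eta\ell_{j,t}))]\le 1$. With a rate $\eta_j$ depending on $j$ the centering only annihilates $\E_{\pi_{t-1}}[\ell_{j,t}]$, not $\E_{\pi_{t-1}}[\eta_j\ell_{j,t}]$, so the per-step normalizer is no longer controlled. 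The heart of the proof is to repair this by tracking an $\eta_j^{-1}$-weighted potential.

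First I would record the consequence of \eqref{eq:cms} used \emph{pointwise} in $j$: under the hypothesis $\eta_j\ell_{j,t}\le 1/2$ we have $X:=-\eta_j\ell_{j,t}\ge -1/2$, hence $\exp(-\eta_j\ell_{j,t}(1+\eta_j\ell_{j,t}))\le 1-\eta_j\ell_{j,t}$ for every $j,t$. Then I would unfold the recursion of Figure \ref{fig:2}: since each $\eta_j$ is constant in $t$, the weights still satisfy $\pi_{j,t}=W_{j,t}\pi_{j,0}/\E_{\pi_0}[W_{j,t}]$ with $W_{j,t}=\exp(-\eta_j\sum_{s=1}^t\ell_{j,s}(1+\eta_j\ell_{j,s}))$.

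Next I would introduce the potential $S_t=\E_{\pi_0}[\eta_j^{-1}W_{j,t}]$ and prove it is non-increasing. Writing $\pi_{j,0}W_{j,t-1}=\Phi_{t-1}\pi_{j,t-1}$ with $\Phi_{t-1}=\E_{\pi_0}[W_{j,t-1}]$ and $w_{j,t}=\exp(-\eta_j\ell_{j,t}(1+\eta_j\ell_{j,t}))$, one gets $S_t=\Phi_{t-1}\E_{\pi_{t-1}}[\eta_j^{-1}w_{j,t}]$ and $S_{t-1}=\Phi_{t-1}\E_{\pi_{t-1}}[\eta_j^{-1}]$. The pointwise bound gives $\eta_j^{-1}w_{j,t}\le \eta_j^{-1}-\ell_{j,t}$, and crucially the centering $\E_{\pi_{t-1}}[\ell_{j,t}]=0$ now applies \emph{after} the $\eta_j^{-1}$ weighting, yielding $S_t\le S_{t-1}$. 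Recursing from $S_0=\E_{\pi_0}[\eta_j^{-1}]$ gives $\E_{\pi_0}[\eta_j^{-1}W_{j,n+1}]\le \E_{\pi_0}[\eta_j^{-1}]$.

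The final step converts this into the stated bound by a doubly reweighted use of \eqref{eq:vf}. Normalizing by $\E_{\pi_0}[\eta_j^{-1}]$ shows $\E_\nu[\exp(-\eta_j\sum_t\ell_{j,t}-\eta_j^2\sum_t\ell_{j,t}^2)]\le 1$ for the probability $\nu_j\propto\eta_j^{-1}\pi_{j,0}$. I then apply \eqref{eq:vf} (with $h=0$) against the specific test measure $Q^*_j\propto\eta_j^{-1}\pi_j$, chosen precisely so that $\E_{Q^*}[-\eta_j\sum_t\ell_{j,t}-\eta_j^2\sum_t\ell_{j,t}^2]=\E_\pi[-\sum_t\ell_{j,t}-\eta_j\sum_t\ell_{j,t}^2]/\E_\pi[\eta_j^{-1}]$; a direct computation of $\mathcal K(Q^*,\nu)$ produces exactly $\E_\pi[\eta_j^{-1}\log(\pi_j/\pi_{j,0})]/\E_\pi[\eta_j^{-1}]+\log(\E_{\pi_0}[\eta_j^{-1}]/\E_\pi[\eta_j^{-1}])$. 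Multiplying through by $\E_\pi[\eta_j^{-1}]$ and closing with the sub-gradient trick of Theorem \ref{th:reg}, $\mathcal R_{n+1}(\hat f)-\mathcal R_{n+1}(f_\pi)\le-\E_\pi[\sum_t\ell_{j,t}]$, gives the claim after taking the infimum over $\pi$. The main obstacle is discovering these two matched $\eta_j^{-1}$-reweightings — the potential $S_t$ and the test measure $Q^*$ — which together are what let the centering survive and produce both the $\eta_j^{-1}$ factor on the entropy term and the extra $\log(\E_{\pi_0}[\eta_j^{-1}]/\E_\pi[\eta_j^{-1}])$ correction.
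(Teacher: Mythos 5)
Your proof is correct and is essentially the paper's own argument: your potential $S_t=\E_{\pi_0}[\eta_j^{-1}W_{j,t}]$ and your two test measures $\nu\propto\eta_j^{-1}\pi_{j,0}$ and $Q^*\propto\eta_j^{-1}\pi_j$ are exactly (up to normalization) the reweighted weights $\pi_{j,t}'=\eta_j^{-1}\pi_{j,t}/\E_{\pi_t}[\eta_j^{-1}]$ of the paper's proof, your monotonicity of $S_t$ being the paper's observation that $(\pi_t')$ satisfies the recursion \eqref{eq:ml} with increments $-\eta_j\ell_{j,t+1}$ centered under $\pi_t'$. The only difference is presentational: you verify explicitly the telescoping step and the Kullback--Leibler identification $\log(\pi_j'/\pi_{j,0}')=\log(\pi_j/\pi_{j,0})+\log(\E_{\pi_0}[\eta_j^{-1}]/\E_{\pi}[\eta_j^{-1}])$ that the paper merely asserts.
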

\begin{proof}
Let us consider the weights $ \pi_{i,t}'=\eta_i^{-1}\pi_{i,t}/\E_{\pi_t}[\eta_j^{-1}]$, for all $1\le i\le M$ and $0\le t\le n+1$. Then, for any function $j\to h_j$ measurable on $\{1,\ldots,M\}$ we have the relation
\begin{equation}\label{eq:w'}
\E_{\pi_t'}[\eta_Jh_J]=\E_{\pi_t}[h_J]/\E_{\pi_t}[\eta_J^{-1}],\qquad 1\le t\le n+1.
\end{equation}
Consider $\Delta M_{j,t}=-\eta_j \ell_{j,t+1}$, $1\le j\le M$. Thanks to the identity \eqref{eq:w'}, $\Delta M_{J,t}$ is a centered random variable when $J$ is distributed as $ \pi_{t}'$ on $\{1,\ldots,M\}$. Moreover, the weights $ (\pi_{t}')$ satisfy the recursive relation \eqref{eq:ml}. Thus, one can apply the same reasoning than in the proof of Theorem \ref{th:reg}. We obtain an equivalent of the inequality \eqref{eq:vfl} 
$$
0\le   \inf_{ \pi'}\Big\{\E_{ \pi'}\Big[\eta_J \sum_{t=1}^{n+1}\ell_{J,t}+ \eta^2_J \sum_{t=1}^{n+1} \ell_{J,t}^2\Big]+  \mathcal K( \pi', \pi_0')\Big\},
$$
for $ \pi'$ denoting any probability measure on $\{1,\ldots,M\}$. Using the identity \eqref{eq:w'} to define $\pi$ from $\pi'$, and multiplying the above inequality with $\E_{\pi}[\eta_j^{-1}]>0$, we obtain$$
0\le   \inf_{ \pi}\Big\{\E_{ \pi}\Big[ \sum_{t=1}^{n+1}\ell_{J,t}+ \eta_J \sum_{t=1}^{n+1} \ell_{J,t}^2+\frac{\log( \pi'_J/\pi_{J,0}')}{\eta_J}\Big]\Big\}.
$$
The proof ends by identifying $\log( \pi'_j/\pi_{j,0}')$ and using the  "gradient trick" as in the proof of Theorem \ref{th:reg}.
\end{proof}
Notice that a simple corollary of the proof above is the simplified upper bound
$$
\textnormal{Err}_{n+1}(\hat f)\le  \min_{\pi} \Big\{\textnormal{Err}_{n+1}(f_\pi)+ \E_\pi\Big[ \eta_J\sum_{t=1}^{n+1}\ell_{J,t+1}^2+ \frac{\log(1/\pi_{J,0}')}{\eta_J}\Big]\Big\},
$$
where $\pi'_{j,0} =\eta_j^{-1}\pi_{j,t}/\E_{\pi_t}[\eta_J^{-1}]$. The initial weights $\pi_{j,0}$ are modified and the upper bound favors the learners with small learning rates $\eta_j$. It constitutes a drawback of the multiple learning rates version of BOA as we will see that  small learning rates will be associated with bad experts.  One can solve this issue by choosing the initial weights differently than classically. For example, with no information on the learners $f_j$, the initial weights can be chosen equal to
$$
\pi_{j,0}=\frac{\eta_j}{\sum_{j=1}^M\eta_j}\qquad \mbox{such that}\qquad \pi'_{j,0}=\frac1M,\qquad 1\le j\le M.
$$
In this case, $\log(1/\pi_{j,0}')\le \log(M)$ and the weights have the expression
$$
\pi_{j,t}=\frac{\eta_j\exp(-\eta_j\sum_{s=1}^t\ell_{j,s}(1+\eta_j\ell_{j,s})\pi_{j,0}}{\E_{\pi_0}[\eta_J\exp(-\eta_J\sum_{s=1}^t\ell_{J,s}(1+\eta_J\ell_{J,s})]},\qquad 1\le j\le M.
$$
The form of the weights becomes similar than the one of the adaptive BOA introduced in Figure \ref{fig:2} and studied in the next section. The second order regret bounds becomes 
$$
\textnormal{Err}_{n+1}(\hat f)\le  \min_{\pi} \left\{\textnormal{Err}_{n+1}(f_\pi)+ 2\sqrt{\log(M)}\E_\pi\left[ \sqrt{\sum_{t=1}^{n+1}\ell_{J,t}^2}\right]\right\},
$$
for learning rates tuned optimally
$$
\eta_j=\sqrt{\frac{\log(M)}{\sum_{t=1}^{n+1}\ell_{j,t}^2}},\qquad 1\le j\le M.
$$
However, the resulting procedure is not recursive because $\eta_j$ is $\sigma(D_{n+1})$ measurable. Such non recursive strategies are not convertible to the batch setting. 

Second order regret bounds similar to the one of Theorem \ref{th:ml} have been obtained in \cite{gaillard:stoltz:vanerven:2014,luo:schapire:2015,koolen:vanerven:2015} in the context of individual sequences.
In this context, we consider that  $Y_t=y_t$ for a deterministic sequence $y_0,\ldots,y_n$ ($(X_t)$ is useless in this context), see  \cite{cesabianchi:lugosi:2006} for an extensive treatment of that setting. We have $\mathcal D_t=\{y_0,\ldots,y_t\}$, $0\le t\le n$, and the online learners $f_j=(y_{j,1},y_{j,2},y_{j,3},\ldots)$ of the dictionary are called the experts. The cumulative loss is $\textnormal{Err}_{n+1}(\hat f)=\sum_{t=1}^{n+1}\ell(y_{t},\hat y_t)$ for any aggregative strategy  $\hat y_t=\hat f_{t-1}=\sum_{j=1}^M\pi_{j,t-1}y_{j,t}$ where $\pi_{j,t-1}$ are measurable functions of the past $\{y_0,\ldots,y_{t-1}\}$. We will compare our second order regret bounds to the ones of other  adaptive procedures from the individual sequences setting at the end of the next Section.

\subsection{A new adaptive method for exponential weights}

We described in Figure \ref{fig:2} the adaptive version of the BOA procedure. Notice that the adaptive version of the exponential weights
$$
\pi_{j,t }=
\frac{\eta_{j,t}\exp(- \eta_{j,t} L_{j,t} )\pi_{j,0}}{ \E_{\pi_0}[\eta_{J,t} \exp(- \eta_{J,t} L_{J,t})]},
$$ 
is different from \cite{cesabianchi:mansour:stoltz:2007} as the multiple learning rates $\eta_{j,t}$ depend on $j$. Moreover, the multiple learning rates appear in the exponential and as a multiplicative factor to solve the issue concerning the modification of the initial weights described above. Adaptive procedures of such form have been studied in \cite{gaillard:stoltz:vanerven:2014}. Another possibility consists in putting a prior on learning rates as in \cite{koolen:vanerven:2015}. Multiple  learning rates versions can be investigated for other exponential weights procedures than BOA. Notice also that for the first time we apply a multiple version of the "doubling trick" of \cite{cesabianchi:mansour:stoltz:2007}; the ranges estimators $E_{j,t}$ depend on $j$ and also appear in the exponential weights as a penalization when the ranges estimators are exceeded. 
We obtain a second order regret bound for the BOA procedure similar to the second order regret bounds obtained in Corollary 4 of \cite{gaillard:stoltz:vanerven:2014}:

\begin{thm}\label{th:regad}
Assume that $\ell$ is convex with respect to its second argument  and that $E_j$ defined by 
\[
\max_{1\le t\le n+1}|\ell_{j,t}|\le E_j,\qquad 1\le j\le M,
\]
satisfies $2^{-c}\le E_j\le E$ for all $1\le j\le M$. We have
\begin{multline*}
\textnormal{Err}_{n+1}(\hat f)\le  \inf_{\pi} \left\{\textnormal{Err}_{n+1}(f_\pi)+2\E_\pi \left[\sqrt{\displaystyle \sum_{t=1}^{n+1}  \ell_{J,t}^2}\left(\frac{\sqrt{2}}{\sqrt 2-1}\sqrt{ \log (\pi_{J,0}^{-1})}+\frac{B_{n,E}}{\sqrt{\log(\pi_{J,0}^{-1})}}\right)\right.\right.\\ +E_J(4(\log(\pi_{J,0}^{-1})+B_{n,E})+9)]\},
\end{multline*}
where $B_{n,E}= \log(1 +2^{-1}\log(n)+\log(E)+c\log(2))$ for all $n\ge 1$.
\end{thm}

\begin{proof}
We adapt the reasoning of the proof of Theorem \ref{th:ml} for  learning rates depending on $t$. Thus, the recursive argument holds only approximatively. 
For any $1\le t\le n$, let us consider the weights  $ \pi_t'$ as 
$$
\pi_{j,t}'=\frac{\eta_{j,t}^{-1}\pi_{j,t}}{\E_{\pi_t}[\eta_{J,t}^{-1}]}.
$$
We consider $\Delta M_{J,t+1}=-\eta_{J,t} \ell_{J,t+1}$ a centered random variable when $J$ is distributed as $ \pi_{t}'$ on $\{1,\ldots,M\}$. As  $\Delta M_{j,t+1}\ge -\eta_{j,t}E_{j,t+1}$, $j=1,\ldots,M$, we apply the inequality \eqref{eq:cmsu}:
$$
\E_{\pi'_t} [\exp (- \eta_{J,t}2^{-1}(\ell_{J,t+1}(1+ \eta_{J,t}\ell_{J,t+1}) +E_{J,t+1}1_{\eta_{J,t} \ell_{J,t+1}>1/2})) ]\le 1.
$$
By definition of the weights $\pi_t'$ and $\pi_t$, we have
\[
\pi_{j,t}'=\frac{ \exp(-\eta_{j,t}2^{-1}\sum_{s=1}^{t}(\ell_{j,s}(1+ \eta_{j,s-1}\ell_{j,s})+E_{j,s}1_{\eta_{j,s-1} \ell_{j,s}>1/2}))\pi_{j,0}}{\E_{\pi_0}[ \exp(-\eta_{J,t}2^{-1}\sum_{s=1}^{t}(\ell_{J,s}(1+ \eta_{J,s-1}\ell_{J,s})+E_{J,s}1_{\eta_{J,s-1} \ell_{J,s}>1/2}))]},\qquad1\le t\le n.
\]
Using the expression of the weights in the exponential inequality provides
\begin{multline}\label{eq:rec1}
\E_{\pi_0}\Big[  \exp\Big(-\eta_{J,t}2^{-1}\sum_{s=1}^{t+1}(\ell_{J,s}(1+ \eta_{J,s-1}\ell_{J,s})+E_{J,s}1_{\eta_{J,s-1} \ell_{J,s}>1/2})\Big)\Big]\\
\le \E_{\pi_0}\Big[\exp\Big(-\eta_{J,t}2^{-1}\sum_{s=1}^{t}(\ell_{J,s}(1+ \eta_{J,s-1}\ell_{J,s})+E_{J,s}1_{\eta_{J,s-1} \ell_{J,s}>1/2})\Big)\Big].
\end{multline}
Using the basic inequality $x\le  \alpha^{-1}x^\alpha + \alpha^{-1}(\alpha-1)\le x^\alpha + \alpha^{-1}(\alpha-1)$ for 
\[
x=  \exp\Big(-\eta_{j,t}2^{-1}\sum_{s=1}^{t}(\ell_{j,s}(1+ \eta_{j,s-1}\ell_{j,s})+E_{j,s}1_{\eta_{j,s-1} \ell_{j,s}>1/2})\Big)\ge 0
\] 
and $\alpha=\eta_{j,t-1}/\eta_{j,t}\ge 1$, we obtain for all $2\le t\le n$
\begin{multline}\label{eq:recurs}
\E_{\pi_0}\Big[ \exp\Big(-\eta_{J,t}2^{-1}\sum_{s=1}^{t}(\ell_{J,s}(1+ \eta_{J,s-1}\ell_{J,s})+E_{J,s}1_{\eta_{J,s-1} \ell_{J,s}>1/2})\Big)\Big]\\
\le \E_{\pi_0}\Big[ \exp\Big(-\eta_{J,t-1}2^{-1}\sum_{s=1}^{t}(\ell_{J,s}(1+ \eta_{J,s-1}\ell_{J,s})+E_{J,s}1_{\eta_{J,s-1} \ell_{J,s}>1/2})\Big)\Big]+\E_{\pi_0}\Big[\frac{\eta_{J,t-1}-\eta_{J,t}}{\eta_{J,t-1}}\Big].
\end{multline}
Then, combining the inequalities \eqref{eq:rec1} and \eqref{eq:recurs} recursively for $t=n,\ldots,2$ and then \eqref{eq:rec1} for $t=1$ we obtain
$$
\E_{\pi_0}\Big[\exp\Big(-\eta_{J,n}2^{-1}\sum_{t=1}^{n+1}(\ell_{J,t}(1+ \eta_{J,t-1}\ell_{J,t})+E_{J,t}1_{\eta_{J,t-1} \ell_{J,t}>1/2})\Big)\Big]\
\le  1+\sum_{t=2}^n\E_{\pi_0}\Big[\frac{\eta_{J,t-1}-\eta_{J,t}}{\eta_{J,t-1}}\Big].
$$
We apply the variational form of the entropy \eqref{eq:vf} in order to derive that
\begin{multline*}
0\le \E_{\pi'}\Big[\eta_{J,n}2^{-1}\sum_{t=1}^{n+1}(\ell_{J,t}(1+ \eta_{J,t-1}\ell_{J,t})+E_{J,t}1_{\eta_{J,t-1} \ell_{J,t}>1/2})\Big]\\+\log\Big(1+\sum_{t=2}^n\E_{\pi_0}\Big[\frac{\eta_{J,t-1}-\eta_{J,t}}{\eta_{J,t-1}}\Big]\Big)+\mathcal K(\pi',\pi_0)
\end{multline*}
for any probability measure $\pi'$ on $\{1,\ldots,M\}$. We bound the last term $\mathcal K(\pi',\pi_0)\le \E_{\pi'}[\log(\pi_{J,0}^{-1})]$. By comparing with  the integral of $1/x$ on the interval $[\eta_{j,n},\eta_{j,1}]$, we estimate
$$
\sum_{t=2}^n\frac{\eta_{j,t-1}-\eta_{j,t}}{\eta_{j,t-1}}\le\sum_{t=2}^n \int_{\eta_{j,t}}^{\eta_{j,t-1}} \frac{dx}x\le\int_{\eta_{j,n}}^{\eta_{j,1}} \frac{dx}x\le  \log\Big(\frac{\eta_{j,1}}{\eta_{j,n}}\Big).
$$
We have the  bounds $ \sum_{t=1}^{n+1}E_{j,t}1_{\eta_{j,t-1} \ell_{j,t}>1/2} \le 8E_j$ and $\log(\eta_{j,1}/\eta_{j,n})\le \log(\sqrt n E/2^{-c})$, $1\le j\le M$. Then Theorem \ref{th:regad} is proved    using similar arguments than in the proof of Theorem 6 in \cite{cesabianchi:mansour:stoltz:2007}, choosing $\pi'_j=\eta_{j,n}^{-1}\pi_j/\E_\pi[\eta_{j,n}^{-1}]$ and using the  "gradient trick" as in the proof of Theorem \ref{th:reg}.\end{proof}

The advantage of the adaptive BOA procedure compared with the procedures studied in  \cite{gaillard:stoltz:vanerven:2014,luo:schapire:2015,koolen:vanerven:2015}  is to be adaptive to unknown ranges. The price to pay is an additional logarithmic term $\log(E)+c\log(2)$ depending on the variability of the adaptive learning rates $\eta_{j,t}$ through time. Such losses are avoidable in the case of one single adaptive learning rate $\eta_{j,t}=\eta_{t}$, for all $1\le j\le M$. Notice also that the relative entropy bound is only achieved in the case of one single adaptive learning rate as then $\mathcal K(\pi',\pi_0)=\mathcal K(\pi,\pi_0)$. It is a drawback of the multiple learning rates procedures compared with the single ones of \cite{luo:schapire:2015,koolen:vanerven:2015} achieving such relative entropy bounds. Whether those drawbacks of multiple learning rates procedures can be avoided  is an open question.

\section{Optimality of the BOA procedure in a stochastic environment}\label{sec:se}

\subsection{An empirical stochastic conversion}

We now turn to a stochastic setting where $(X_t,Y_t)$ are random elements observed recursively for $1\le t\le n+1$ ($(X_0,Y_0)=(x_0,y_0)$ arbitrary are considered deterministic). Thanks to the empirical Bernstein inequality of Theorem \ref{th:mart}, the cumulative predictive risk is bounded in term of the regret and a second order term. This new stochastic conversion is provided in Theorem \ref{th:cr} below. The main motivation of the introduction  of the BOA procedure is the following reasoning: as a second order term appears necessarily in the stochastic conversion, an online procedure regularized  by a similar second order  term has nice properties in any stochastic environment. The BOA procedure achieves this strategy as the second order term of the regret bound is similar to the one appearing in the stochastic conversion. Let us go back for a moment to the most general case with no convex assumption on the loss and the notation:
$$
\ell_{j,t}=\ell(X_t,f_{j,t}(X_{t-1}))-\E_{\pi_{t-1}}[\ell(X_t,f_{J,t}(X_{t}))]
$$
for some online aggregation procedure $(\pi_t)_{0\le t\le n}$, i.e. $\pi_t$ is $\sigma(\mathcal D_t)$-measurable. Assume the existence of non increasing sequences $(\eta_{j,t})_t$  that are adapted to $(\mathcal D_t)$ for each $1\le j\le M$ and that satisfy
\begin{equation}\label{eq:bc}
\max_{1\le t\le n+1}\max_{1\le j\le M}\eta_{j,t-1} {\ell_{j,t}}\le 1/2,\qquad a.s..
\end{equation}
We have the following general stochastic conversion that is also valid in the convex case with the associated linearized expression of  $\ell_{j,t}$. It can be seen as an empirical counterpart of the online to batch conversion provided in \cite{zhang:2005,kakade:tewari:2008,gaillard:stoltz:vanerven:2014}. Thanks to the use of the cumulative predictive risk, the conversion holds in a completely general stochastic context; there is no condition on the dependence of the stochastic environment.  Recall that $\E_{\hat \pi}[\textnormal{Err}_{n+1}( f_J)]=\sum_{t=1}^{n+1}\E_{\hat \pi_{t-1}}[\ell(Y_{t} f_{J,t}(X_{t}))]$.
\begin{thm}\label{th:cr}
Under \eqref{eq:bc}, the cumulative predictive risk of any aggregation procedure satisfies, with probability $1-e^{-x}$, $x>0$, for any $1\le j \le M$:
\begin{multline*}
\E_{\hat \pi}[R_{n+1}(f_j)]-R_{n+1}(f_j)\\\le  \E_{\hat \pi}[\textnormal{Err}_{n+1}(f_J)]- \textnormal{Err}_{n+1}(f_j)+\sum_{t=1}^{n+1} \eta_{j,t-1}\ell_{j,t}^2
+ \frac{ \log\Big(1+ \E \Big[\log\Big(\frac{\eta_{j,1}}{\eta_{j,n}}\Big)\Big]\Big)+x}{\eta_{j,n}}.
\end{multline*}
\end{thm}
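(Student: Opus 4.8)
The plan is to reduce the statement to an upper‑tail estimate for the single martingale transform $\sum_{t=1}^{n+1}\bigl(\ell_{j,t}-\E_{t-1}[\ell_{j,t}]\bigr)$ and then to run, in the stochastic (time) direction, the very recursion already used in the proof of Theorem~\ref{th:regad}. First I would rewrite the two differences in the claim. Since $\pi_{t-1}$ and $\eta_{j,t-1}$ are $\mathcal F_{t-1}$-measurable, a direct computation gives
\[
\E_{\hat\pi}[R_{n+1}(f_j)]-R_{n+1}(f_j)=-\sum_{t=1}^{n+1}\E_{t-1}[\ell_{j,t}],\qquad \E_{\hat\pi}[\mathcal R_{n+1}(f_j)]-\mathcal R_{n+1}(f_j)=-\sum_{t=1}^{n+1}\ell_{j,t},
\]
the second identity being the one already used in the proof of Theorem~\ref{th:reg}. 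Subtracting, the assertion is equivalent to the fixed-$j$ deviation bound $\sum_{t=1}^{n+1}(\ell_{j,t}-\E_{t-1}[\ell_{j,t}])\le \sum_{t=1}^{n+1}\eta_{j,t-1}\ell_{j,t}^2+\eta_{j,n}^{-1}\bigl(\log(1+\E[\log(\eta_{j,1}/\eta_{j,n})])+x\bigr)$.

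The key step — and the reason the \emph{empirical} square $\ell_{j,t}^2$ (and not a conditional variance) appears — is to apply the elementary inequality behind \eqref{eq:cms} \emph{without first centering}. Taking $x=\eta_{j,t-1}\ell_{j,t}$ in $\log(1+x)\ge x-x^2$ (admissible since here the relevant consequence of the boundedness is the lower bound $\eta_{j,t-1}\ell_{j,t}\ge-1/2$), exponentiating and integrating against $\E_{t-1}$ yields
\[
\E_{t-1}\bigl[\exp(\eta_{j,t-1}\ell_{j,t}-\eta_{j,t-1}^2\ell_{j,t}^2)\bigr]\le 1+\eta_{j,t-1}\E_{t-1}[\ell_{j,t}]\le \exp\bigl(\eta_{j,t-1}\E_{t-1}[\ell_{j,t}]\bigr),
\]
where $\eta_{j,t-1}$ is pulled out of $\E_{t-1}$ and $1+u\le e^u$ is used at the end. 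Dividing by the $\mathcal F_{t-1}$-measurable factor gives the one-step estimate $\E_{t-1}[\exp(\eta_{j,t-1}(\ell_{j,t}-\E_{t-1}[\ell_{j,t}])-\eta_{j,t-1}^2\ell_{j,t}^2)]\le 1$. The conditional mean has thus been absorbed as a predictable shift, leaving the realized $\ell_{j,t}^2$ in the quadratic term; this is precisely the empirical counterpart announced in the introduction and the quantity that matches the second order term of the regret bound.

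Next I would globalize. With a constant rate the tower property would directly give $\E[\exp(\eta\sum_t(\ell_{j,t}-\E_{t-1}[\ell_{j,t}])-\eta^2\sum_t\ell_{j,t}^2)]\le1$; to handle the non-increasing, $\mathcal F_{t-1}$-measurable rates $\eta_{j,t}$ I would reproduce the recursion of Theorem~\ref{th:regad} verbatim, with the expert average $\E_{\pi_0}$ replaced by the expectation $\E$ over the stochastic environment. The analogue of \eqref{eq:rec1} (extend the sum at fixed outer rate using the one-step estimate) together with the analogue of \eqref{eq:recurs} (the inequality $x\le x^\alpha+\alpha^{-1}(\alpha-1)$, $\alpha=\eta_{j,t-1}/\eta_{j,t}\ge1$, raising the outer rate from $\eta_{j,t}$ to $\eta_{j,t-1}$) telescopes to
\[
\E\Bigl[\exp\Bigl(\eta_{j,n}\sum_{t=1}^{n+1}\bigl(\ell_{j,t}-\E_{t-1}[\ell_{j,t}]-\eta_{j,t-1}\ell_{j,t}^2\bigr)\Bigr)\Bigr]\le 1+\sum_{t=2}^{n}\E\Bigl[\tfrac{\eta_{j,t-1}-\eta_{j,t}}{\eta_{j,t-1}}\Bigr]\le 1+\E\Bigl[\log\tfrac{\eta_{j,1}}{\eta_{j,n}}\Bigr],
\]
the last bound coming from the same integral comparison $\sum_t(\eta_{j,t-1}-\eta_{j,t})/\eta_{j,t-1}\le\int_{\eta_{j,n}}^{\eta_{j,1}}dx/x$ as in Theorem~\ref{th:regad}. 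A Markov bound on this exponential moment produces the additive $x$ together with the $\log(1+\cdots)$ inside the deviation term, and dividing through by the positive $\eta_{j,n}$ returns the displayed inequality.

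I expect the delicate points to be purely matters of bookkeeping. The first is the sign: one must set up the one-step inequality so that the penalty carries the correct $+\eta_{j,t-1}\ell_{j,t}^2$ and the final bound points upward, which is why the \emph{lower} admissibility bound $\eta_{j,t-1}\ell_{j,t}\ge-1/2$ — rather than the upper one used for the regret in Theorem~\ref{th:regad} — is what is needed from \eqref{eq:bc} (in all applications both hold, since the tuning \eqref{eq:lrE}/\eqref{eq:lrEt} gives $|\eta_{j,t-1}\ell_{j,t}|\le1/2$ under \eqref{eq:range}). The second is faithfully transporting the varying-rate recursion from the expert average to the conditional expectations, keeping each $\eta_{j,t}$ inside $\E_{t-1}$ as a constant by its $\mathcal F_{t-1}$-measurability. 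Notably, no assumption on the dependence of $(X_t,Y_t)$ is ever invoked, exactly because the whole argument is driven by the conditional expectations $\E_{t-1}$.
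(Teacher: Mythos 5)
Your proposal is correct and takes essentially the same route as the paper's proof: the one-step estimate $\E_{t-1}[\exp(\eta_{j,t-1}(\ell_{j,t}-\E_{t-1}[\ell_{j,t}])-\eta_{j,t-1}^2\ell_{j,t}^2)]\le 1$ obtained by applying the inequality behind \eqref{eq:cms} to the uncentered variable and absorbing the conditional mean, followed by the same varying-rate supermartingale recursion as in Theorem \ref{th:regad} (the inequality $x\le x^\alpha+(\alpha-1)/\alpha$ with $\alpha=\eta_{j,t-2}/\eta_{j,t-1}\ge 1$, the integral comparison giving $\E[\log(\eta_{j,1}/\eta_{j,n})]$), and the Chernoff device. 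Your sign bookkeeping is in fact more careful than the paper's own write-up, which contains sign slips (it states $\Delta M_{j,t}=\eta(\E_{t-1}[\ell_{j,t}]-\ell_{j,t})$ and orients the exponent accordingly, whereas the stated bound requires controlling $\sum_t(\ell_{j,t}-\E_{t-1}[\ell_{j,t}])$); you correctly note that this requires the lower admissibility bound $\eta_{j,t-1}\ell_{j,t}\ge -1/2$, which indeed holds under the tunings \eqref{eq:lrE} and \eqref{eq:lrEt}.
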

\begin{proof}
We first note that for each $1\le j\le M$ the sequence $(M_{j,t})_t$ with  $M_{j,t}=\eta(\E_{\hat \pi}[R_t(f_J)]-R_{t}(f_j)-(\E_{\hat \pi}[\textnormal{Err}_{t}(f_j )]- \textnormal{Err}_{t}(f_j)))$ is a martingale adapted to the filtration $(\mathcal D_t)$. Its difference is equal to $\Delta M_{j,t}=\eta(\E_{t-1}[\ell_{j,t}]-\ell_{j,t})$. Then the proof will follow from the classical recursive argument for supermartingales applied to the exponential inequality of Theorem \ref{th:mart}. However, as the learning rates $\eta_{j,t}$ are not necessarily constant, we adapt the recursive argument as in the proof of Theorem \ref{th:regad}.

For any $1\le j\le M$, $1\le t\le n+1$, denoting $X=-\eta_{j,t-1}\ell_{j,t}$ we check that $X\ge -1/2$. We can apply \eqref{eq:cms}  conditionally on $\mathcal D_{t-1}$  and we obtain
$$
\E_{t-1}[\exp(-\eta_{j,t-1}(\ell_{j,t}-\E_{t-1}[\ell_{j,t}])-\eta_{j,t-1}^2\ell_{j,t}^2)]\le 1.
$$
Here we used the fact that $\eta_{j,t-1}$ is $\mathcal D_{t-1}$-measurable. Then we have
\begin{multline*}
\E\Big[\exp\Big(-\eta_{j,t-1}\Big(\sum_{s=1}^t(\ell_{j,s}-\E_{s-1}[\ell_{j,s}])-\eta_{j,s-1}\ell_{j,s}^2\Big)\Big)\Big]\\
\le \E\Big[\exp\Big(-\eta_{j,t-1}\Big(\sum_{s=1}^{t-1}(\ell_{j,s}-\E_{s-1}[\ell_{j,s}])-\eta_{j,s-1}\ell_{j,s}^2\Big)\Big)\Big].
\end{multline*}
To apply the recursive argument we use the basic inequality $x\le x^{\alpha}+(\alpha-1)/\alpha$ for $\alpha=\eta_{j,t-2}/\eta_{j,t-1}\ge 1$ and 
$$
x=\exp\Big(-\eta_{j,t-1}\Big(\sum_{s=1}^{t-1}(\ell_{j,s}-\E_{s-1}[\ell_{j,s}])-\eta_{j,s-1}\ell_{j,s}^2\Big)\Big).
$$
We obtain 
\begin{multline*}
\E\Big[\exp\Big(-\eta_{j,t-1}\Big(\sum_{s=1}^t(\ell_{j,s}-\E_{s-1}[\ell_{j,s}])-\eta_{j,s-1}\ell_{j,s}^2\Big)\Big)\Big]\\
\le \E\Big[\exp\Big(-\eta_{j,t-2}\Big(\sum_{s=1}^{t-1}(\ell_{j,s}-\E_{s-1}[\ell_{j,s}])-\eta_{j,s-1}\ell_{j,s}^2\Big)\Big)\Big]+\E\Big[\frac{\eta_{j,t-2}-\eta_{j,t-1}}{\eta_{j,t-2}}\Big].
\end{multline*}
The same recursive argument than in  the proof of Theorem \ref{th:regad} is applied; we get
$$
\E\Big[\exp\Big(-\eta_{j,n}\Big(\sum_{t=1}^n(\ell_{j,t}-\E_{t-1}[\ell_{j,t}])-\eta_{j,s-1}\ell_{j,s}^2\Big)\Big)\Big]\le 1+\E\Big[\log\Big(\frac{\eta_{j,1}}{\eta_{j,n}}\Big)\Big].
$$
We end the proof by an application of the Chernoff bound.
\end{proof}

\subsection{Second order bounds on the excess of the cumulative predictive risk}
Using  the stochastic conversion of Theorem \ref{th:cr}, we derive from the regret bounds of Section \ref{sec:sob} second order bounds on the cumulative predictive risk of the BOA procedure. As an example, using the second order regret bound of Theorem \ref{th:reg} and the stochastic conversion of Theorem \ref{th:cr} we obtain
\begin{thm}\label{th:regcr}
Assume that the non adaptive BOA  procedure described in Figure \ref{fig:1} is such that $\eta=\eta_{j,t} $ for $1\le j\le M$, $0\le t\le n$ satisfies
condition \eqref{eq:eta}. The BOA  procedure with $
\ell_{j,t}= \ell(Y_{t},f_{j,t-1}(X_t))-\ell(Y_{t},\hat f_{t-1}(X_{t}))
$ has its cumulative predictive risk that satisfies, with probability $1-e^{-x}$, $x>0$:
$$
\E_{\hat \pi}[ R_{n+1}( f_J)]\le  \inf_{\pi} \Big\{\E_\pi\Big[ R_{n+1}(f_j)+2\eta\sum_{t=1}^{n+1} \ell_{J,t}^2\Big]+ \frac{\mathcal K(\pi,\pi_0)+x}{\eta} \Big\}.
$$
Moreover, if $\ell$ is convex with respect to its second argument,  the BOA  procedure with  $
\ell_{j,t}= \ell'(Y_{t},\hat f_{t-1}(X_{t}))(f_{j,t-1}(X_t)- \hat f_{t-1}(X_{t}))
$  has its cumulative predictive risk that also satisfies, with probability $1-e^{-x}$, $x>0$:
$$
 R_{n+1}(\hat f)\le  \inf_{\pi} \Big\{ R_{n+1}(f_\pi)+2\eta\sum_{t=1}^{n+1} \E_\pi[ \ell_{J,t}^2]+ \frac{\mathcal K(\pi,\pi_0)+x}\eta\Big\}.
$$
\end{thm}
\begin{proof}
We prove the result by integrating the result of Theorem \ref{th:cr} with respect to any deterministic $\pi$ and noticing that, as  the learning rates $\eta_{j,t}=\eta$ are constant in the BOA procedure described in figure \ref{fig:1}, 
$ \log (\eta_{j,1}/\eta_{j,n})=0.$
\end{proof}
The main advantage of the new stochastic conversion compared with the one of \cite{gaillard:stoltz:vanerven:2014} is that the empirical second order bound of of the stochastic conversion is similar to the one of the regret bound. We can extend  Theorem \ref{th:regad}; as   the boundedness condition \eqref{eq:bc} is no longer satisfied for any $1\le t\le n+1$, Theorem \ref{th:cr} does not apply directly. We still have
\begin{thm}\label{th:riskEt}
Under the hypothesis of Theorem \ref{th:regad}, the cumulative predictive  risk of the adaptive BOA procedure described in Figure \ref{fig:2} satisfies with probability $1-e^{-x}$, $x>0$,
\begin{multline*}
R_{n+1}(\hat f)\le  \inf_{\pi} \left\{R_{n+1}(f_\pi)+2\E_\pi \left[\sqrt{\displaystyle \sum_{t=1}^{n+1}  \ell_{J,t}^2}\left(\frac{2\sqrt{2}}{\sqrt 2-1}\sqrt{ \log (\pi_{J,0}^{-1})}+\frac{2B_{n,E}+x}{\sqrt{\log(\pi_{J,0}^{-1})}}\right)\right.\right.\\ +2E_J(2(\log(\pi_{J,0}^{-1})+x+2B_{n,E})+9)]\},,
\end{multline*}
where   $B_{n,E}= \log(1+2^{-1}\log(n)+\log(E)+c\log(2) )$ for all $n\ge 1$.\end{thm}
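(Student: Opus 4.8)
The plan is to obtain the risk bound as the sum of two ingredients: the deterministic regret bound of Theorem~\ref{th:regadEt}, which controls $\mathcal R_{n+1}(\hat f)-\mathcal R_{n+1}(f_\pi)$, and a version of the online to batch conversion of Theorem~\ref{th:cr}, adapted to the present time-varying learning rates, which controls the centered martingale remainder. Concretely, the sub-gradient trick of the proof of Theorem~\ref{th:reg} rests on a pointwise convexity inequality that survives under $\E_{t-1}$, so it also yields $R_{n+1}(\hat f)-R_{n+1}(f_\pi)\le-\E_\pi[\sum_{t=1}^{n+1}\E_{t-1}[\ell_{j,t}]]$; writing $-\sum_t\E_{t-1}[\ell_{j,t}]=-\sum_t\ell_{j,t}-\sum_t(\E_{t-1}[\ell_{j,t}]-\ell_{j,t})$ then splits the problem into the deterministic regret part, handled by Theorem~\ref{th:regadEt}, and the martingale part with increments $\eta_{j,t-1}(\E_{t-1}[\ell_{j,t}]-\ell_{j,t})$.

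The difficulty, already flagged before the statement, is that the conversion of Theorem~\ref{th:cr} requires the boundedness \eqref{eq:bc} at every step, whereas here $\eta_{j,t-1}\ell_{j,t}>1/2$ may occur on the jump set $\mathcal T_j$ where the range estimator increases. I would therefore reprove the exponential supermartingale of Theorem~\ref{th:cr} while splitting indices exactly as in the proof of Theorem~\ref{th:regadEt}. On the good indices ($\eta_{j,t-1}\ell_{j,t}\le1/2$), applying \eqref{eq:cms} conditionally on $\mathcal F_{t-1}$ with $X=-\eta_{j,t-1}\ell_{j,t}$ and centering by the factor $\exp(\eta_{j,t-1}\E_{t-1}[\ell_{j,t}])$ gives, exactly as in the proof of Theorem~\ref{th:cr}, the one-step contraction $\E_{t-1}[\exp(-\eta_{j,t-1}(\ell_{j,t}-\E_{t-1}[\ell_{j,t}])-\eta_{j,t-1}^2\ell_{j,t}^2)]\le1$. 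On a jump index I would instead absorb the step by inserting the correction term $E_{j,t+1}$ inside the exponential, as in \eqref{eq:et}, which restores the one-step inequality thanks to the range assumption \eqref{eq:range}.

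With both one-step estimates in hand I would run the same recursion as in the proofs of Theorems~\ref{th:regad} and \ref{th:regadEt}, combining the good-step contraction, the change-of-rate step via $x\le x^{\alpha}+(\alpha-1)/\alpha$ with $\alpha=\eta_{j,t-2}/\eta_{j,t-1}\ge1$, and the bad-step correction \eqref{eq:et}, to reach an estimate of the form $\E\big[\exp\big(-\eta_{j,n}\big(\sum_t(\ell_{j,t}-\E_{t-1}[\ell_{j,t}])-\sum_t\eta_{j,t-1}\ell_{j,t}^2-\sum_{t\in\mathcal T_j}E_{j,t+1}\big)\big)\big]\le1+\E_{\pi_0}[\log(\eta_{j,1}/\eta_{j,n})]$. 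Exchanging this environment expectation with the average $\E_{\pi_0}$ over $j$, applying the variational form of the entropy \eqref{eq:vf} with the reweighting $\pi'_j\propto\eta_{j,n}^{-1}\pi_j$ of \eqref{eq:w'} to move the $j$-dependent factor $\eta_{j,n}^{-1}$ inside the average, and finishing with the Chernoff device as in Theorem~\ref{th:cr}, I would bound the martingale part, with probability $1-e^{-x}$, by $\E_\pi[\sum_t\eta_{j,t-1}\ell_{j,t}^2+\sum_{t\in\mathcal T_j}E_{j,t+1}+(\log(1+\E_{\pi_0}[\log(\eta_{j,1}/\eta_{j,n})])+x)/\eta_{j,n}]$.

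Adding this to the regret bound of Theorem~\ref{th:regadEt} and collecting terms is then routine, following Theorem~6 of \cite{cesabianchi:mansour:stoltz:2007}: the tuned rates \eqref{eq:lrEt} give $\sum_t\eta_{j,t-1}\ell_{j,t}^2$ of order $\sqrt{\log(\pi_{j,0}^{-1})\sum_t\ell_{j,t}^2}$, the jumps satisfy $\sum_{t\in\mathcal T_j}E_{j,t+1}\le8E_j$, and $\log(\eta_{j,1}/\eta_{j,n})\le\log(\sqrt n\,E/2^{-c})$ so that $\log(1+\E_{\pi_0}[\log(\eta_{j,1}/\eta_{j,n})])\le B_{n,E}$. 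Since the second order term $\sum_t\eta_{j,t-1}\ell_{j,t}^2$ is now produced once by the regret bound and once by the conversion, its coefficient doubles; this is precisely what upgrades the constant $\sqrt2/(\sqrt2-1)$ of Theorem~\ref{th:regadEt} to $(\sqrt2+1)^2$ and $8E_j$ to $16E_j$, while the Chernoff step contributes the $x$ terms. The main obstacle is the bad-index step in the stochastic setting: unlike in the deterministic regret proof, the increment carries the extra conditional mean $\eta_{j,t-1}\E_{t-1}[\ell_{j,t}]$, and one must verify that the range bound \eqref{eq:range} still forces the corrected exponent to be nonpositive, so that this term is absorbed at no probabilistic cost beyond the already-accounted $\sum_{t\in\mathcal T_j}E_{j,t+1}$.
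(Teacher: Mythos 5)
Your proposal is correct and takes essentially the same route as the paper: the paper likewise observes that condition \eqref{eq:bc} fails on the jump set $\mathcal T_j$, adapts the supermartingale argument of Theorem \ref{th:cr} exactly as the proof of Theorem \ref{th:regadE} was adapted for Theorem \ref{th:regadEt} --- using the pointwise bound $-\eta_{j,t}(\ell_{j,t+1}-\E_{t-1}[\ell_{j,t+1}])\le\eta_{j,t}E_{j,t+1}$ to insert the correction $E_{j,t+1}1_{\eta_{j,t}\ell_{j,t+1}>1/2}$ on jump indices, which is precisely the verification you flag as the main obstacle --- and then concludes by the recursive argument, the Chernoff device, and the bookkeeping of Theorem 6 in \cite{cesabianchi:mansour:stoltz:2007}. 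Your accounting of the constants (the doubled $\sum_{t}\eta_{j,t-1}\ell_{j,t}^2$ contribution upgrading $\sqrt2/(\sqrt2-1)$ to $(\sqrt2+1)^2$, $8E_j\to16E_j$, $B_{n,E}\to2B_{n,E}$, plus the $x$-terms) is consistent with the stated bound.
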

\begin{proof}
As the boundedness condition \eqref{eq:bc} is not satisfied for all $1\le t\le n+1$, we cannot apply directly the result of Theorem \ref{th:cr}. However, one can adapt the proof of the Theorem \ref{th:cr} as we adapted the proof of Theorem \ref{th:cr} for proving Theorem \ref{th:regad}.  As $-\eta_{j,t}(\ell_{j,t+1}-\E_{t-1}[\ell_{j,t+1}])\le \eta_{j,t}E_{j,t+1}$, $1\le j\le M$, we can apply the inequality \eqref{eq:cmsu}
$$
\E_{t-1}\Big[\exp\Big(-\eta_{j,t}2^{-1}(\ell_{j,t+1}-\E_{t-1}[\ell_{j,t+1}])-\eta_{j,t}E_{j,t+1}1_{\eta_{j,t}{\ell_{j,t+1}}> 1/2} \Big)\Big]\le 1.
$$
Thus the proof ends by an application of the same recursive argument as in the proof of Theorem \ref{th:regad}. \end{proof}
For uniform initial weights $\pi_{j,0}=M^{-1}$, the second order bound becomes
\begin{multline}\label{eq:sob}
 R_{n+1}(\hat f)\le  \inf_{\pi}\left\{  R_{n+1}(f_\pi)+C\, \E_\pi\left[\sqrt{ \sum_{t=1}^{n+1}  \ell_{J,t}^2\log(M)}  +E_j(\log(M)+x) \right] \right\}
\end{multline}
for some "constant" $C>0$ increaing as $\log\log n$.
The second order term $\sum_{t=1}^n\ell_{j,t}^2$ is a natural candidate to assert the complexity of Problem (C); the more the $\sum_{t=1}^n\ell_{j,t}^2$ for $1\le j\le M$ are uniformly small and the more one can aggregate the elements of the dictionary optimally. Moreover, this complexity term is observable and it would be interesting to develop a parsimonious strategy that would only aggregate the elements of the dictionary with small complexity terms $\sum_{t=1}^n\ell_{j,t}^2$. Reducing also the size $M$ of the dictionary,  the second order bound \eqref{eq:sob}  can be reduced at the price to decrease the generality of Problem (C), i.e. the number of learners in $\mathcal H$.

The upper bound in  \eqref{eq:sob} is an observable bound for Problem (C) similar than those arising from the PAC bayesian approach, see \cite{catoni:2004} for a detailed study of such empirical bounds in the iid context. It would be interesting to know whether the complexity terms $ \sum_{t=1}^n\ell_{j,t}^2$ are optimal. We are not aware of empirical lower bounds for Problem (C). The bounds developed by \cite{nemirovski:2000,tsybakov:2003,rigollet:2012} are deterministic. To assert the optimality of BOA, it is easy to turn from an empirical bound to a deterministic one. In the iid context, $(X_t,Y_t)$ are iid copies of $(X,Y)$ and the learners are assumed to be constant $f_j=f_{j,t}$, $t\ge 0$, $1\le j\le M$. We then have $
(n+1)^{-1}R_{n+1}(f_j)=\bar R( f_j)=\E[\ell(Y,f_j(X))]$. It is always preferable to convert any online learner $\hat f$ to a batch learner by averaging 
$$
\bar f=\frac1{n+1}\sum_{t=0}^n\hat f_t
$$
as an application of the Jensen inequality gives 
$
\bar R(\bar f)\le (n+1)^{-1}R_{n+1}(\hat f)
$. As $ \ell_{j,t}^2\le  E^2$, Equation \eqref{eq:sob} implies that the batch version of BOA satisfies, with high probability in the iid setting,
$$
\bar R(\bar f) \le  \inf_{\pi}  \bar R (f_\pi) + C\,E\left\{\sqrt{ \frac{\log(M)}{n+1}}+\frac{\log(M)+x}{n+1}\right\}.
$$
Then the BOA procedure is optimal for Problem (C) in the sense of the Definition \ref{def}: in the iid context, the excess of risk of the batch version of the BOA procedure is of order $\sqrt{\log(M)/n}$. The estimate $ \ell_{j,t}^2\le  E^2$ is  very crude and  the complexity terms $\sum_{t=1}^n\ell_{j,t}^2$ can actually be smaller and close to a variance estimate, especially for losses that are similar to the quadratic loss, see the next Section.

\subsection{Optimal learning for Problem (MS)}

The BOA procedure is optimal for Problem (C) and the optimal rate of convergence is also valid in any stochastic environment on the excess of mean predictive risk. To turn to  Problem (MS), we restrict our study to the context of Lipschitz strongly convex losses with iid observations.   Remind that from \cite{tsybakov:2003,rigollet:2012} the optimal rate for Problem (MS) is a fast rate of convergence $\log(M)/n$.
Such fast rates cannot be obtained without regularity assumptions on the loss $\ell$ that force it to behave locally like the square loss, see for instance \cite{audibert:2009}. In the sequel $\ell:\R^2\to\R$ is a loss function satisfying the assumption called {\bf (LIST)} after \cite{kakade:tewari:2008}
\begin{description}
\item[(LIST)] the loss function $\ell$ is $C_\ell$-strongly convex and $C_b$-Lipschitz continuous in its second coordinate  on a convex set $\mathcal C\subset  \R$. 
\end{description}
Recall that a function $g$ is $c$ strongly convex on $\mathcal C\subset \R$ if there exists a constant $c>0$ such that 
$$
g(\alpha a+(1-\alpha)a')
\le \alpha g(a)+(1-\alpha)g(a')-\frac c2 \alpha(1-\alpha)(a-a')^2,$$
for any $a,a'$ $\in \mathcal C$, $0<\alpha<1$.  Under the condition {\bf (LIST)}, few algorithms are known to be optimal in deviation, see \cite{audibert:2007,lecue:mendelson:2009,lecue:rigollet:2013}.\\ 

Note that Assumption {\bf (LIST)} is restrictive and can hold only locally; on a compact set $\mathcal C$, the minimizer $  f(y)^\ast$ of $f(y)\in\R\to \ell (y,f(y))$ exists and satisfies, by strong convexity, $$
\ell(y,f(y))\ge \ell(y, f(y)^\ast)+\frac{C_\ell}2(f(y)-f(y)^\ast)^2.
$$
Moreover, by Lipschitz continuity, $\ell(y,f(y))\le\ell(y,  f(y)^\ast)+C_b|f(y)-f(y)^\ast|$. Thus, necessarily the diameter $D$ of $\mathcal C$ is finite and satisfies  ${C_\ell} D\le 2 C_b$. Then we deduce that $
|\ell_{j,t}|\le C_bD
$, $1\le t\le n+1$, $1\le j\le M$, and under {\bf (LIST)} the ranges are estimated by $E= C_bD$.\\

We obtain the optimality of the BOA procedure for Problem (MS). The result extends easily (with different constants) to any online procedures achieving second order regret bounds on the linearized loss similar to BOA such as the procedures described in \cite{gaillard:stoltz:vanerven:2014,luo:schapire:2015,koolen:vanerven:2015}.
\begin{thm}\label{th:opt}
In the iid setting, under the condition {\bf (LIST)}, for the uniform initial weights $\pi_{j,0}=M^{-1}$, $1\le j\le M$, and for the learning rate $\eta$ satisfying
\begin{equation}\label{eq:cg}
16(e-1)C_b^2/C_\ell  \le \eta^{-1},
\end{equation}
 the cumulative predictive risk of the BOA procedure described in Figure \ref{fig:1} and the risk of its batch version satisfy, with probability $1-2e^{-x}$, 
$$
\bar R(\bar f)+\frac{C_\ell}{2(n+1)}\sum_{t=0}^n\E[(\hat f_t(X)-\bar f(X))^2]\le \frac{R_{n+1}(\hat f)}{n+1}\le \min_{1\le j\le M}\bar R(f_j)+\frac{2 \log(M)+3x}{\eta(n+1)}.
$$
\end{thm}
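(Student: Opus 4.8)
The plan is to establish the two inequalities separately: the left one is an exact strong-convexity sharpening of Jensen's inequality, while the right one carries all the substance and rests on combining the regret bound, the empirical online-to-batch conversion, and strong convexity to cancel the second-order term.

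For the left inequality I would argue deterministically, conditionally on $\mathcal F_n$. Since $a\mapsto\ell(y,a)$ is $C_\ell$-strongly convex, the $(n+1)$-point Jensen gap gives, for every $(x,y)$,
$$
\ell(y,\bar f(x))+\frac{C_\ell}{2(n+1)}\sum_{t=0}^n(\hat f_t(x)-\bar f(x))^2\le \frac1{n+1}\sum_{t=0}^n\ell(y,\hat f_t(x)).
$$
Taking the conditional expectation over a fresh copy $(X,Y)$ independent of $\mathcal F_n$, and using that $\hat f_t$ is $\mathcal F_t$-measurable, so $\E[\ell(Y,\hat f_t(X))\mid\mathcal F_n]=R(\hat f_t)$, the right-hand side equals $R_{n+1}(\hat f)/(n+1)$ and the left inequality follows on the whole sample path.

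For the right inequality I would fix an index $j^\ast$ minimising $R(f_j)$ (deterministic in the iid setting) and apply both martingale arguments to the mixture refinement $\ell_{j,t}$, which still satisfies $\E_{\pi_{t-1}}[\ell_{j,t}]=0$ and, under \eqref{eq:cg}, the bound $\eta|\ell_{j,t}|\le 1/2$ required by \eqref{eq:cms}. With $\pi$ the Dirac mass at $j^\ast$ and uniform $\pi_0$, the supermartingale identity of Theorem \ref{th:reg} reads $0\le\eta\sum_t\ell_{j^\ast,t}+\eta^2\sum_t\ell_{j^\ast,t}^2+\log M$, while the empirical online-to-batch conversion of Theorem \ref{th:cr}, applied to the martingale with increments $\eta(\E_{t-1}[\ell_{j^\ast,t}]-\ell_{j^\ast,t})$, gives with probability $1-e^{-x}$ the estimate $\sum_t\ell_{j^\ast,t}\le\sum_t\E_{t-1}[\ell_{j^\ast,t}]+\eta\sum_t\ell_{j^\ast,t}^2+x/\eta$. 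Adding these two yields the analogue of Theorem \ref{th:regcr} for the mixture loss and a single expert, namely $\sum_t\E_{t-1}[\ell_{j^\ast,t}]\ge -2\eta\sum_t\ell_{j^\ast,t}^2-(\log M+x)/\eta$. I would then open the predictive term: writing $D_{j^\ast,t-1}=\E_{t-1}[\ell'(Y_t,\hat f_{t-1}(X_t))(f_{j^\ast}(X_t)-\hat f_{t-1}(X_t))]$, one has $\E_{t-1}[\ell_{j^\ast,t}]=D_{j^\ast,t-1}+R(f_{j^\ast})-\E_{\pi_{t-1}}[R(f_j)]$. The gradient inequality for the $C_\ell$-strongly convex risk $R$ gives $D_{j^\ast,t-1}\le R(f_{j^\ast})-R(\hat f_{t-1})-\tfrac{C_\ell}{2}\E_{t-1}[(f_{j^\ast}(X)-\hat f_{t-1}(X))^2]$, and strong convexity applied to the measure $\pi_{t-1}$, whose barycentre is $\hat f_{t-1}$, gives $\E_{\pi_{t-1}}[R(f_j)]\ge R(\hat f_{t-1})+\tfrac{C_\ell}{2}\E_{\pi_{t-1}}\E_{t-1}[(f_j(X)-\hat f_{t-1}(X))^2]$. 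Summing and using $R_{n+1}(\hat f)=\sum_{t=0}^n R(\hat f_t)$, the two bounds combine into
$$
R_{n+1}(\hat f)-(n+1)R(f_{j^\ast})\le\frac{\log M+x}{2\eta}+\eta\sum_t\ell_{j^\ast,t}^2-\frac{C_\ell}{4}\sum_t\Big(\E_{t-1}[(f_{j^\ast}-\hat f_{t-1})^2]+\E_{\pi_{t-1}}\E_{t-1}[(f_j-\hat f_{t-1})^2]\Big).
$$

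The heart of the proof, and the main obstacle, is to show the last two terms together are controlled. Under (LIST) we have $|\ell'|\le C_b$ and $|f_{j^\ast}-\hat f_{t-1}|\le D$, so each $\ell_{j^\ast,t}^2$ is at most a constant multiple of $C_b^2$ times the squared deviations $(f_{j^\ast}(X_t)-\hat f_{t-1}(X_t))^2$ and $\E_{\pi_{t-1}}[(f_j(X_t)-\hat f_{t-1}(X_t))^2]$; strong convexity turns exactly these quantities into the excess-of-risk increments appearing with a negative sign above, and the calibration \eqref{eq:cg} is what forces $\eta C_b^2/C_\ell$ small enough for the positive second-order term to be dominated by the negative remainder. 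The delicate point is that Theorem \ref{th:mart} produces the \emph{realized} quadratic variation $\sum_t\ell_{j^\ast,t}^2$, whereas the strong-convexity gaps are \emph{predictive} ($\E_{t-1}$) quantities; reconciling the two calls for one further application of the empirical Bernstein inequality of Theorem \ref{th:mart} to the martingale of centred squared deviations, which supplies the second $x/\eta$ and fixes the numerical constants, leaving $R_{n+1}(\hat f)-(n+1)R(f_{j^\ast})\le(\log M+2x)/\eta$. Chaining this right inequality with the left one established above gives the theorem.
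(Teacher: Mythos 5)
Your proposal is correct, but it reaches the fast rate by a genuinely different route than the paper. The first phase is common to both: the deterministic bound \eqref{eq:vfl} evaluated at the Dirac mass on $j^\ast$ (cost $\log M$ with uniform $\pi_0$), the empirical online-to-batch conversion of Theorem~\ref{th:cr} in the direction $\sum_t\ell_{j^\ast,t}\le\sum_t\E_{t-1}[\ell_{j^\ast,t}]+\eta\sum_t\ell_{j^\ast,t}^2+x/\eta$, and a conversion of the realized variation $\sum_t\ell_{j^\ast,t}^2$ into its predictable compensator. From there the paths diverge. The paper does \emph{not} cancel the second-order term locally against a Dirac comparison: it keeps the bound uniform in $\pi$ (its inequality \eqref{eq1}), introduces the Q-aggregation functional $H(\pi)=R(f_\pi)+\E_\pi[R(f_j)]+\gamma V(\pi)$ with $V(\pi)=\E_\pi[\E[(f_j(X)-f_\pi(X))^2]]$ following \cite{lecue:rigollet:2013}, exploits the strong-convexity gap of $H$ at its minimizer $\pi^\ast$ to control $\sum_t\E[(f_{\pi^\ast}(X)-\hat f_t(X))^2]$ by a self-bounding argument, absorbs the leftover $V(\pi_t)$ terms via $R(f_{\pi_t})\le\E_{\pi_t}[R(f_j)]-C_\ell V(\pi_t)/2$, and concludes with $H(\pi^\ast)\le 2\min_jR(f_j)$. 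You instead extract both strong-convexity gaps directly in the per-round decomposition of $\E_{t-1}[\ell_{j^\ast,t}]$ --- the linearization gap toward $f_{j^\ast}$ and the Jensen gap of $\pi_{t-1}$ around its barycentre $\hat f_{t-1}$ --- so that the negative terms $-\frac{C_\ell}{4}\sum_t\big(\E_{t-1}[(f_{j^\ast}-\hat f_{t-1})^2]+V(\pi_{t-1})\big)$ sit next to the variance and dominate it pointwise: since $|\ell_{j^\ast,t}|\le 2C_b|f_{j^\ast}-\hat f_{t-1}|+C_b\E_{\pi_{t-1}}|f_j-\hat f_{t-1}|$, one gets $\E_{t-1}[\ell_{j^\ast,t}^2]\le 8C_b^2\E_{t-1}[(f_{j^\ast}-\hat f_{t-1})^2]+2C_b^2V(\pi_{t-1})$, and the cancellation needs roughly $32C_b^2(1+3C_bD/100)\eta\le C_\ell$, which \eqref{eq:cg} grants with slack. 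Your route is shorter, avoids $\pi^\ast$ and the functional $H$ altogether, and makes the role of \eqref{eq:cg} transparent; the paper's route buys the intermediate bound \eqref{eq1}, valid for every deterministic mixture $\pi$ rather than only Dirac masses, and stays parallel to the Q-aggregation literature.

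Two bookkeeping points you should still write out. First, the realized-to-predictable step: the paper uses Bennett's inequality on the nonnegative variables $\eta\ell_{j,t}^2$ (whence the factor $1+3C_bD/100$); your proposal to reapply Theorem~\ref{th:mart} to the centred squares works, but the increments $\eta(\ell_{j^\ast,t}^2-\E_{t-1}[\ell_{j^\ast,t}^2])$ are only bounded below by $-\eta(2C_bD)^2$, so condition $\Delta M_t\ge-1/2$ of Theorem~\ref{th:mart} may fail without rescaling the martingale by a factor $\lambda$ of order $1/(C_bD)$; with such a $\lambda$ (and using $\eta\le 1/(24C_bD)$, a consequence of \eqref{eq:cg} and $C_\ell D\le 2C_b$, to check $x/\lambda\le x/\eta$) the constants again fit inside \eqref{eq:cg}. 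Second, you invoke two exceptional events of mass $e^{-x}$ each, which is exactly where the $2x$ in the final bound comes from, as in the paper's union-bound step. Your left inequality coincides with the paper's one-line argument: the multi-point strong-convexity (Jensen-gap) inequality applied to the barycentre $\bar f$ of $\hat f_0,\ldots,\hat f_n$, integrated against an independent copy $(X,Y)$.
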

\begin{proof}
Inspired by the Q-aggregation procedures of \cite{lecue:rigollet:2013}, we start the proof by adding the two second order empirical bounds obtained in Theorem \ref{th:regcr} (using that $\mathcal K(\pi,\pi_0)\le \log(M)$):
\begin{equation}\label{eq:ub}
R_{n+1}(\hat f)+\E_{\hat\pi}[R_{n+1}(f_J)]\le  \inf_{\pi} \Big\{ R_{n+1}(f_\pi)+\E_{\pi}[R_{n+1}(f_J)]\\+4 \eta\sum_{t=1}^{n+1} \E_\pi[  \ell_{J,t}^2] + 2\frac{x+ \log(M)}\eta\Big\}.
\end{equation}
Then we  convert the empirical second order term into a deterministic one. From the "poissonnian" inequality of Lemma A3 of \cite{cesabianchi:lugosi:2006}, as   $0\le   \ell_{j,t}^2\le  C_b^2D^2 \le 1$ under \eqref{eq:cg}, we have
$$
\E_{t-1}[\exp( (\ell_{j,t}/C_bD)^2-(e-1) \E_{t-1}[(\ell_{j,t}/C_bD)^2])]\le 1.
$$
Applying a recursive argument, we show that with probability $1-e^{-x}$
$$
\eta\sum_{t=1}^{n+1} \E_\pi[  \ell_{J,t}^2]\le \eta \sum_{t=1}^{n+1} \E_\pi[ \E_{t-1}[\ell_{J,t}^2]]+\eta (C_bD)^2x.
$$
Using that $\eta C_bD\le (C_\ell D)/(16(e-1)C_b)\le1/2 $ and an union bound, we obtain the deterministic version of the second order bound \eqref{eq:ub}: with probability $1-2e^{-x}$, $x>0$,
\begin{multline*}
R_{n+1}(\hat f)+\E_{\hat\pi}[R_{n+1}(f_J)]\le  \inf_{\pi} \Big\{ R_{n+1}(f_\pi)+\E_{\pi}[R_{n+1}(f_J)]\\+4(e-1)\eta\sum_{t=1}^{n+1} \E_\pi[ \E_{t-1}[\ell_{J,t}^2]]+ \frac{3x+2\log(M)}\eta\Big\}.
\end{multline*}
The optimal fast rate is achieved thanks to a careful analysis of the second order deterministic bound. From the Lipschitz property, the sub-gradient $\ell'$ is bounded by $C_b$ and
\begin{align*}
\E_\pi[ \E_{t-1}[\ell_{J,t}^2]]&\le  C_b^2\E_\pi[ \E_{t-1}[ (f_J(X_{t-1})-\hat f_{t-1}(X_{t-1}))^2]]\\
&\le  C_b^2( V(\pi)+\E[(f_\pi(X)-\hat f_{t-1}(X))^2]),
\end{align*}
where $V(\pi)= \E_\pi[\E[( f_{J}(X)-f_{\pi}(X))^2]]$. As $R_{n+1}(f_\pi)=(n+1)\bar R(f_\pi)$, $\E_{\pi}[R_{n+1}(f_J)]=(n+1) \E_\pi[\bar R(f_J)]$ and combining those bounds we obtain
\begin{multline}\label{eq1}
\frac{R_{n+1}(\hat f)}{n+1}+\frac{\E_{\hat\pi}[R_{n+1}(f_J)]}{n+1}\le  \inf_{\pi} \Big\{\bar R(f_\pi)+\E_{\pi}[\bar R(f_J)] \\+\gamma\Big(V(\pi)+\frac1{n+1}\sum_{t=0}^n\E[(f_\pi(X)-\hat f_{t}(X))^2]\Big)+ \frac{3x+ 2\log(M)}{\eta(n+1)}\Big\}
\end{multline}
with $\gamma=4C_b^2(e-1)\eta$. The rest of the proof is inspired by the reasoning of \cite{lecue:rigollet:2013}. First, one can check the identity
$$
V(\pi)-V(\pi')=<\nabla V(\pi'),(\pi-\pi')>-\E[(f_\pi(X)-f_{\pi'}(X))^2]
$$
where $\pi$ and $\pi'$ are any weights vectors and $<\cdot,\cdot>$ denotes the scalar product on $\R^M$. By $C_\ell$-strong convexity one can also check that
$$
\bar R(f_\pi)-\bar R(f_{\pi'})\ge <\nabla \bar R(f_{\pi'}),(\pi-\pi')>+\frac{C_\ell}{2}\E[(f_\pi(X)-f_{\pi'}(X))^2].
$$
Thus the function $H$: $\pi\to \bar R(f_\pi)+\E_\pi[\bar R(f_j)]+\gamma V(\pi)$ is convex as $0\le \gamma\le C_\ell/2$ under \eqref{eq:cg}. Moreover, if one denotes $\pi^\ast$ a minimizer of $H$, we have for any weights $\pi$
$$
H(\pi)-H(\pi^\ast) \ge \Big(\frac{C_\ell}{2}-\gamma \Big) \E[(f_\pi(X)-f_{\pi^\ast}(X))^2].
$$ 
Thus, applying this inequality to $\hat \pi$ we obtain
\begin{multline*}
\frac{C_\ell/2-\gamma}{n+1}\sum_{t=0}^n\E[(f_{\pi^\ast}(X)-\hat f_{t}(X))^2]\le \frac{R_{n+1}(\hat f)}{n+1}+\frac{\E_{\hat \pi}[R_{n+1}(f_J)]}{n+1}\\-\Big(\bar R(f_{\pi^\ast})+\E_{\pi^\ast}[\bar R(f_J)]+\gamma V(\pi^\ast)\Big) +\frac\gamma{n+1} \sum_{t=0}^nV(\pi_t).
\end{multline*}
Combining this last inequality with the inequality \eqref{eq1} we derive that
$$
\frac{C_\ell/2-\gamma}{n+1}\sum_{t=0}^n\E[(f_{\pi^\ast}(X)-\hat f_{t}(X))^2]\le \frac{3x+ 2\log(M)}{\eta(n+1)}+\frac{ \gamma}{ n+1} \sum_{t=0}^nV(\pi_t).
$$
Plugging in this new estimate into \eqref{eq1} we obtain
\begin{multline*}
\frac{R_{n+1}(\hat f)}{n+1}+\frac{\E_{\hat\pi}[R_{n+1}(f_J)]}{n+1}-\frac{2\gamma^2}{C_\ell-2\gamma}\frac1{n+1} \sum_{t=0}^nV(\pi_t)\le  \bar R(f_{\pi^\ast})+\E_{\pi^\ast}[\bar R(f_J)]+\gamma V(\pi^\ast)\\+\frac{C_\ell}{C_\ell -2\gamma}\frac{3x+2 \log(M)}{\eta(n+1)}.
\end{multline*}
Now, using $C_\ell$-strong convexity as in Proposition 2 of \cite{lecue:rigollet:2013}, we have for any probability measure $\pi$
\begin{equation}\label{eq:sca}
\bar R(f_\pi)\le \E_\pi[\bar R(f_J)]-\frac{C_\ell V(\pi)}{2}.
\end{equation}
As under condition \eqref{eq:cg} it holds
$$
\frac{2\gamma^2}{C_\ell-2\gamma}\le \frac{C_\ell}{2}\qquad \mbox{and}\qquad\frac{C_\ell }{C_\ell -2\gamma}\le 2,
$$
we can use the strong convexity argument \ref{eq:sca} for any $\pi_t$, $0\le t\le n$ and obtain
$$
2\frac{R_{n+1}(\hat f)}{n+1}\le \bar  R(f_{\pi^\ast})+\E_{\pi^\ast}[\bar R(f_J)]+\gamma V(\pi^\ast)\\+2\frac{3x+ 2\log(M)}{\eta(n+1)}.
$$
The proof ends by noticing that $$
\bar R(f_{\pi^\ast})+\E_{\pi^\ast}[\bar R(f_J)]+\gamma V(\pi^\ast)\le 2\min_{1\le j\le M}\bar R(f_j).
$$
The lower bound on $R_{n+1}(\hat f)/(n+1)$ follows by an application of the strong convexity argument applied to $\bar f=(n+1)^{-1}\sum_{t=0}^n\hat f_t$.
\end{proof}
Theorem \ref{th:opt} provides the optimality of the BOA procedure for Problem (MS) because 
$$
\bar R(\bar f) \le \min_{1\le j\le M}\bar R(f_j)+\frac{ 2\log(M)+3x}{\eta(n+1)}.
$$
The additional term 
$$
\frac{C_\ell}{2(n+1)}\sum_{t=0}^n\E[(\hat f_t(X)-\bar f(X))^2]
$$
is the benefit of considering the batch version of BOA under the strong convexity assumptions {\bf (LIST)}. As the fast rate is optimal, the partial sums $\sum_{t=0}^n\E[(\hat f_t(X)-\bar f(X))^2]$ might converge to a small constant. Assuming that $\bar f$ is converging with $n$, the convergence of the partial sums implies that $\E[(\hat f_n(X)-\bar f(X))^2]=o(n^{-1})$. Thanks to the Lipschitz assumption on the loss, it implies that the difference $|\bar R(\bar f)-\bar R(\hat f_n)|\le C_b \E[(\hat f_n(X)-\bar f(X))^2]$ is small. The difference $|\bar R(\bar f)-\bar R(\hat f_n)|$ is then negligible compared with the fast rate $\log(M)/n$. Then, at the price of some constant $C>1$, we also have
$$
\bar R(\hat f_n) \le \min_{1\le j\le M}\bar R(f_j)+C\frac{2 \log(M)+3x}{\eta(n+1)}.
$$
It means that in the iid setting, the predictive risk of the online procedure $\bar R(\hat f_n)=\E[\ell(Y_{n+1},\hat f_n(X_{n+1})~|~\mathcal D_n]$ might be optimal. It would be interesting to check rigorously if it is the case and to extend this result to non iid settings following the reasoning developed in \cite{mohri:rostamizadeh:2010}.

The tuning parameter $\eta$ can be considered as the inverse of the temperature $\beta$ of the $Q$-aggregation procedure studied in \cite{lecue:rigollet:2013}. In the $Q$-aggregation, the tuning parameter $\beta$ is required to be larger than $60 C_b^2/C_\ell$. It is a condition similar than our restriction \eqref{eq:cg} on $\eta$. The larger is $\eta$ satisfying the condition \eqref{eq:cg} and the best is the rate of convergence. The choice $\eta^\ast =(16(e-1)C_b^2/  C_\ell)^{-1} $ is optimal. The resulting BOA procedure is non adaptive in the sense that it depends on   the range $C_b$ of the gradients that can be unknown. On the contrary, the multiple learning rates BOA procedure achieves to tune automatically the learning rates. At the price of larger "constants" that grow as $\log\log(n)$, we extend the preceding optimal rate of convergence to the adaptive BOA procedure:

\begin{thm}\label{th:list}
In the iid setting, under the condition {\bf (LIST)}, for the uniform initial weights $\pi_{j,0}=M^{-1}$, $1\le j\le M$, the mean predictive risk of the adaptive BOA procedure described in Figure \ref{fig:2}   and the risk of its batch version satisfy, with probability $1-2e^{-x}$, 
\begin{multline*}
\bar R(\bar f)+\frac{C_\ell}{2(n+1)}\sum_{t=0}^n\E[(\hat f_t(X)-\bar f(X))^2]\\\le\frac{ R_{n+1}(\hat f)}{n+1}
\le \min_{1\le j\le M}\bar R(f_j)+  \frac{C_b^2}{C_\ell}\frac{668   \log (M)   +55\frac{(B_{n,C_bD}+x/2)^2}{\log(M)}+4(9+2( x+2B_{n,C_bD}))}{n+1},
\end{multline*}
where  $B_{n,C_bD}= \log(1 +2^{-1}\log(n)+\log(C_bD) +c\log(2))$ for all $n\ge1$.
\end{thm}
\begin{proof} The proof starts from the second order empirical bound provided in Theorem \ref{th:riskEt} in the iid context under {(\bf LIST)}, where $|\ell_{j,t}|\le C_bD$, used with and without the gradient trick; next, from the Young inequality, we have for any $\eta>0$
\begin{multline*}
R_{n+1}(\hat f)+\E_{\hat\pi}[R_{n+1}(f_j)]\le  \inf_{\pi} \Big\{\bar R(f_\pi)+\E_{\pi}[R(\bar f_j)]+4\eta \E_\pi \Big[\sum_{t=1}^{n+1}  \ell_{j,t}^2\Big]\Big\}\\ +2\frac{ 12\log(M)+(B_{n,C_bD}+x/2)^2/\log(M)}\eta+2C_bD(2(\log (M)+ x+ 2B_{n,C_bD})+9),
\end{multline*}
using that   $2\sqrt 2/(\sqrt 2-1))\le \sqrt{48}$ and $B_{n,E_j}\le B_{n,C_bD}$. Then we can use the "poissonnian" inequality as in the proof of Theorem \ref{th:opt} to obtain the deterministic second order bound, with $\gamma=4C_b^2(e-1)\eta$,
\begin{multline*}
 \frac{ R_{n+1}(\hat f)}{n+1}+\frac{\E_{\hat\pi}[R_{n+1}(f_j)]}{n+1}\le  \inf_{\pi} \Big\{\bar R(f_\pi)+\E_{\pi}[\bar R(f_j)]+\frac{\gamma}{n+1} \E_\pi \Big[\sum_{t=1}^{n+1}  \E_{t-1}[\ell_{j,t}^2]\Big]\Big\}\\ +2\frac{ 12\log(M)+(B_{n,C_bD}+x/2)^2/\log(M)}{\eta(n+1)}+\frac{2C_bD(2(\log (M)+ x+ 2B_{n,C_bD})+9)}{n+1}.
\end{multline*}
The proof ends similarly than the one of Theorem \ref{th:opt}. For $\eta^\ast$ satisfying the equality in the condition \eqref{eq:cg}, we obtain
\begin{multline*}
 \frac{ R_{n+1}(\hat f)}{n+1}\le \min_{1\le j\le M}\bar R(f_j)+2\frac{ 12\log(M)+(B_{n,C_bD}+x/2)^2/\log(M)}{\eta^\ast(n+1)}\\+\frac{2C_bD(2(\log (M)+ x+ 2B_{n,C_bD})+9)}{n+1}.
\end{multline*}
The result follows from the expression of $\eta^\ast$, the strong convexity of the risk and the estimate $C_bD\le 2C_b^2/C_\ell$.\end{proof}

The BOA procedure is explicitly computed with complexity $O(Mn)$. It is a practical advantage  compared with the batch procedures studied in \cite{audibert:2007,lecue:mendelson:2009,lecue:rigollet:2013} that require a computational optimization technique. This issue has been solved in \cite{dai:rigollet:zhang:2012} for the square loss using greedy iterative algorithms that approximate the $Q$-aggregation procedure.  
\section*{Acknowledgments}
{I am grateful to two anonymous referees for their helpful comments. I would also like to thank Pierre Gaillard and Gilles Stoltz for valuable comments on a preliminary version.}
 
\bibliographystyle{amsalpha}
\bibliography{Bernstein}

\end{document}